\documentclass{article}

\usepackage[preprint]{neurips_2026}
\usepackage{microtype}
\usepackage{graphicx}
\usepackage{booktabs}
\usepackage{amsmath,amssymb,amsfonts}
\usepackage{amsthm}
\usepackage{mathtools}
\usepackage{algorithm}
\usepackage{algorithmic}
\usepackage{hyperref}
\usepackage{url}
\usepackage{float}
\usepackage{enumitem}
\usepackage{tikz}
\usetikzlibrary{positioning,fit,arrows.meta,calc}

\newcommand{\E}{\mathbb{E}}
\newcommand{\Prob}{\mathbb{P}}
\newcommand{\Var}{\mathrm{Var}}
\newcommand{\1}{\mathbf{1}}
\newcommand{\norm}[1]{\left\lVert #1\right\rVert}

\usepackage{titlesec}

\newenvironment{localparagraphgap}[1]{%
  \par
  \begingroup
  \titlespacing*{\paragraph}{0pt}{#1}{1em}%
}{%
  \par
  \endgroup
}

\theoremstyle{plain}
\newtheorem{theorem}{Theorem}
\newtheorem{corollary}{Corollary}
\newtheorem{proposition}{Proposition}
\newtheorem{lemma}{Lemma}
\theoremstyle{definition}

\theoremstyle{remark}
\newtheorem{remark}{Remark}

\title{Marginal-Contribution Policy Gradients under Filtered Feedback for Multi-Agent LLMs}

\author{
Elai Ben-Gal\\
Department of Mathematics \\
Stanford University \\
\texttt{bengal@stanford.edu}
\And
Stela Tong\\
Graduate School of Business \\
Stanford University \\
\texttt{yjtong@stanford.edu}
}

\begin{document}

\maketitle

\begin{abstract}
We develop a unified treatment of credit assignment for RL training in multi-agent LLM systems. We show that observed reward alone cannot distinguish an agent that determines it from one that never affects it, and that standard shared-reward training performs exact gradient ascent on each agent's private utility rather than system performance. Moreover, we prove no single scalar per agent can consistently account for joint performance once agents interact. We thus develop the unique background-dependent notion of marginal contribution satisfying natural consistency requirements. From it we derive gradient-correct marginal contribution training signals, identify them from filtered feedback, and optimally allocate a budget of exact counterfactual evaluations against learned-signal error. Instantiated in GRPO, our signal improves routed GSM8K accuracy over winner-take-all training at no extra generation cost.
\end{abstract}

\section{Introduction}

Modern LLMs increasingly use multi-agent systems, in which a mechanism sits between the model and the user: several agents generate outputs that a system-level rule selects, combines, or filters into one deployed answer. In \emph{routed} systems, agents compete: each proposes a candidate and a router selects one (e.g., a pool of solvers that attacks a mathematics problem). In \emph{collaborative} systems, agents hold complementary roles (e.g., planner, solver, critic) and jointly produce the output. When such systems are RL fine-tuned, a natural default is to use methods built for single policies: observe reward on the deployed output and allocate it back to the participating agents.

This fails as a method of individual credit assignment. The reward is a result of the system's output, not any agent's action, and we show this leads to structural failures. The reward's distribution can be identical whether an agent determines it or never affects it, so the reward distribution alone cannot identify credit (Proposition~\ref{prop:shared-credit}). Worse, the standard winner-take-all-credit ascent in routed systems is not a noisy estimate of the system gradient but an exact gradient of each agent's private utility, with an irreducible bias (Theorem~\ref{thm:wta}). Repairing this requires separating three questions. \emph{Optimization}: which objective does a training rule actually ascend? \emph{Attribution}: what does it mean for an agent to have contributed? \emph{Identification}: which contribution quantities does the data determine?  

We answer these in order, beginning with a unified construction for any mechanism (Section \ref{sec:definitions}). We characterize contribution axiomatically and find that once agents interact, no single scalar per agent can consistently account for agent contribution, while requiring contributions to sum to every group's value uniquely determines a field of marginal contributions conditioned on which other agents are present (Section~\ref{sec:axioms}).
We determine which entries of this field are gradient-correct training
signals (Section~\ref{sec:signals}). Exact contributions are expensive, hence we learn them from the training data, including under routing, where only the selected candidate's reward is observed (Sections~\ref{sec:critic}--\ref{sec:routing-est}). We then optimally allocate a fixed budget of exact counterfactual evaluations across the learned contributions (Section~\ref{sec:selective}). Instantiated in GRPO, the removal signal identified from routing data improves GSM8K accuracy over winner-take-all training at no extra generation cost (Section~\ref{sec:experiment}).

\section{Related Work}
\label{sec:related}

\paragraph{Credit under shared rewards.}
That a system-level reward need not reveal individual contribution, and
that allocating it back to agents can misalign private incentives with
the system objective, are classical observations on team production and
collectives
\citep{alchian1972production,holmstrom1982moral,wolpert2001optimal}.
Multi-agent RL addresses the first problem with counterfactual signals:
COMA, difference rewards, and action-dependent baselines remove teammate
variation while preserving the team policy gradient
\citep{foerster2018counterfactual,wu2018variance,castellini2021difference};
Lemma~\ref{lem:baseline} gives the corresponding condition in our
setting. Cooperative game theory treats attribution through marginal
contributions and scalar summaries such as the Shapley value
\citep{shapley1953value,wang2020shapley,sharp_2026}. We sharpen both
threads for routed LLM training: winner-take-all training exactly
optimizes each agent's private utility, with bias equal to the displaced
competitor return (Theorem~\ref{thm:wta}), and once agents interact no
single scalar per agent can consistently account for contribution, so we
retain the full family of group-dependent contributions, of which
Shapley is a lossy average.

\paragraph{Multi-agent LLM training and evaluation budgets.}
Recent multi-agent LLM methods obtain counterfactual credit through
fixed-context replay, agent-removal rollouts, shared-prefix comparisons,
or separate router--expert signals
\citep{c3_2026,ccpo_2026,treecredit_2026,morse_2026}. We address the
questions that precede any estimation method: which counterfactual
quantity is valid for training, and which quantities the observed
feedback identifies. Ordinary training data identifies the conditional
expected reward given any chosen subset of the observed outputs. Under
routing, propensity correction identifies the effect of deleting one
candidate without additional rollouts
\citep{dudik2011doubly} (Corollary~\ref{cor:routing-dr}). Separately,
work on allocating rollout compute in RL with verifiable rewards targets
gradient variance \citep{vip_2026,trace_2026}; we instead allocate a
budget of exact counterfactual evaluations against learned-signal error,
obtaining the classical square-root allocation \citep{neyman1934} with a
matching lower bound (Section~\ref{sec:selective}).

\section{Mechanisms, Filtered Feedback, and Classical Failures} \label{sec:definitions}
In this section we develop a general theory of multi-agent LLM systems.
The development is deliberately abstract, and nothing in it is specific
to language models: the results apply to any RL system of differentiable stochastic policies, composed by a mechanism and trained by policy gradients. The LLM setting is where the theory is necessary and novel, as action spaces are too large to marginalize, and counterfactuals are too expensive to run exhaustively. We thus instantiate in LLMs, and focus on prevalent cases such as softmax routing where a result requires them.

\paragraph{Notation.}
Random variables and population-level scalars are uppercase; realizations and
deterministic functions are lowercase; estimators carry a hat.
Mechanism randomness $\xi$ is the only exception. Layer subscripts are dropped when clear from context.

A \emph{system} consists of agents arranged in \textit{layers} $\ell \in\{1,\ldots,L\}$, a \textit{mechanism} $M$ that maps their actions to one deployed output, and a scalar reward on that output. Thus, the mechanism produces \emph{filtered feedback}: the observed reward attaches to the deployed output, and what it reveals about any agent's action is determined by $M$. At layer $\ell$, the active agents $N_\ell=\{1,\dots,K_\ell\}$
observe the history $H_\ell$, which contains the prompt and all outputs of
earlier layers, and act conditionally independently: agent $i$ samples
$A_\ell^{(i)}\sim\pi_{\theta_i}(\cdot\mid H_\ell)$, so that
\begin{equation}
\label{eq:layer-factorization}
\Prob_\theta\left(A_\ell^{(N_\ell)}=a^{(N_\ell)}\mid H_\ell=h\right)
=
\prod_{i\in N_\ell}\pi_{\theta_i}\left(a^{(i)}\mid h\right).
\end{equation}

The factorization applies within a layer only. Across layers, agents interact
through the history: sequential pipelines such as planner--solver--critic are
chains of singleton layers, so later agents observe all earlier outputs through $H_\ell$, while a routing or voting pool is a single layer of parallel agents acting without sight of one another. Any mixture of the two, such as parallel solvers followed by an editor, is a sequence of layers of the corresponding widths.

\begin{figure}[H]
\centering
\begin{tikzpicture}[
  font=\footnotesize,
  agent/.style={draw, rounded corners=2pt, minimum width=1.0cm,
                minimum height=0.5cm, fill=gray!8},
  hist/.style={draw, rounded corners=2pt, minimum width=0.85cm,
               minimum height=0.5cm, fill=gray!25},
  mech/.style={draw, thick, rounded corners=2pt, minimum width=0.95cm,
               minimum height=1.7cm, fill=gray!40},
  lab/.style={font=\scriptsize, gray!60!black},
  arr/.style={->, >=stealth},
  alloc/.style={->, >=stealth, dashed, gray!60!black}
]

\node[hist] (H1) {$H_1$};

\node[agent, right=0.8cm of H1]   (A1v) {$\pi_{\theta_2}$};
\node[agent, above=0.22cm of A1v] (A1a) {$\pi_{\theta_1}$};
\node[below=0.02cm of A1v]        (d1)  {$\vdots$};
\node[agent, below=0.32cm of d1]  (A1K) {$\pi_{\theta_{K_1}}$};

\draw[arr] (H1) -- (A1a);
\draw[arr] (H1) -- (A1v);
\draw[arr] (H1) -- (A1K);

\node[hist, right=0.9cm of A1v] (H2) {$H_2$};
\draw[arr] (A1a) -- node[above, lab, pos=0.6] {$A_1^{(1)}$} (H2);
\draw[arr] (A1v) -- (H2);
\draw[arr] (A1K) -- node[below, lab, pos=0.7, inner sep=4pt] {$A_1^{(K_1)}$} (H2);

\node[agent] (B1) at ($(A1a -| H2.east)+(1.15,0)$) {$\pi_{\theta_1}$};
\node[agent, below=0.22cm of B1] (B2) {$\pi_{\theta_2}$};
\node[below=0.02cm of B2]        (d2) {$\vdots$};
\node[agent, below=0.32cm of d2] (BK) {$\pi_{\theta_{K_2}}$};
\draw[arr] (H2) -- (B1);
\draw[arr] (H2) -- (B2);
\draw[arr] (H2) -- (BK);

\node[hist, right=0.9cm of B2] (H3) {$H_3$};
\draw[arr] (B1) -- (H3);
\draw[arr] (B2) -- (H3);
\draw[arr] (BK) -- (H3);
\node[right=0.45cm of H3] (dots) {$\cdots$};
\draw[arr] (H3) -- (dots);

\node[mech, right=0.55cm of dots] (M) {$M$};
\draw[arr] (dots) -- (M.west);
\draw[arr] ($(M.north)+(0,0.3)$) -- node[right, lab, pos=0.0] {$\xi$} (M.north);
\node[right=0.55cm of M] (Y) {$Y$};
\draw[arr] (M) -- (Y);
\node[right=0.45cm of Y] (G) {$G$};
\draw[arr] (Y) -- (G);

\coordinate (fork) at ($(B1.north east)+(0.25,0.65)$);
\draw[alloc, -] (G.north) to[out=120, in=5, looseness=0.6]
  node[above, lab, pos=0.5] {$\alpha_i G$} (fork);
\draw[alloc] (fork) to[out=185, in=25] (A1a.north east);
\draw[alloc] (fork) to[out=215, in=45] (B1.north east);
  
\node[draw=gray!50, dashed, rounded corners=3pt, fit=(A1a)(A1K),
      inner sep=3pt, label={[lab]above:layer $1$}] {};
\node[draw=gray!50, dashed, rounded corners=3pt, fit=(B1)(BK),
      inner sep=3pt, label={[lab]above:layer $2$}] {};

\end{tikzpicture}
\caption{Agents act in layers, conditionally independently given the
history $H_\ell$; the mechanism $M$ deploys one output $Y$, which alone
is rewarded, and $\alpha_iG$ is returned to each agent (dashed).}
\label{fig:system}
\end{figure}

Let $A=\left(A_\ell^{(N_\ell)}\right)_{\ell=1}^{L}$ collect agent actions across all layers. The mechanism determines the deployed output 
$Y=M(H_1,A,\xi),$
where $H_1$ is the initial prompt and $\xi$ is any mechanism randomness. The system receives bounded reward $G=r(H_1,Y)$ with objective 
$J_{\mathrm{sys}}(\theta)=\E_\theta[G].$
The mechanism also specifies a reward allocation rule $\alpha_i(H_1,A,\xi)\in[0,1]$ with $\sum_i\alpha_i=1$,
defining agent $i$'s private utility $U_i(\theta)=\E_\theta[\alpha_iG].$
Note that the allocation rule partitions the system objective into $\sum_\ell K_\ell$ private ones, namely $J_{\mathrm{sys}}=\sum_iU_i$. For $i\in N_\ell$, define the score
$\Psi_i
=
\nabla_{\theta_i}
\log\pi_{\theta_i}\left(A_\ell^{(i)}\mid H_\ell\right).$ Throughout, rewards are bounded, each policy $\pi_{\theta_i}$ is differentiable in $\theta_i$, $\E\norm{\Psi_i}^2<\infty$ for every agent, and relevant gradients may be interchanged with expectation in $J_{\mathrm{sys}}$ and $U_i$. All gradient identities are on-policy. All proofs are deferred to the appendix.

We begin with the failure that motivates the paper. Training each agent
on a shared reward presumes the reward says something about that agent
in particular. The next result shows it need not: the reward can have
exactly the same distribution whether an agent is the sole cause of it
or entirely irrelevant to it, so no amount of reward data alone can
tell the cases apart. The quantity we use, displayed in
\eqref{eq:action-conditioned-relevance} below, is how much the agent's
realized action moves the expected reward. This is one natural
\textit{marginal contribution} notion among several developed in
Section~\ref{sec:axioms}. The result needs only that it separates an
agent who determines the reward from one who never affects it, which
any reasonable notion must.

\begin{proposition}[The reward's law does not identify contribution]
\label{prop:shared-credit}
There exist two systems with identical agent policies and identical deployed-reward distributions such that
\begin{equation}
\label{eq:action-conditioned-relevance}
\E[G\mid H,A^{(1)}]-\E[G\mid H]
\end{equation}
is nonconstant in one and vanishes almost surely in the other. Thus the reward distribution alone does not identify this contribution quantity. The construction embeds in any system with at least two agents, in the same or different layers.
\end{proposition}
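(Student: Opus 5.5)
The plan is an explicit two-system construction that differs only in the mechanism $M$, keeping the agent policies fixed. Take two agents in the same layer, each with binary actions $A^{(1)},A^{(2)}\in\{0,1\}$ drawn i.i.d.\ Bernoulli$(1/2)$ given $H$. Take the reward $r(H_1,y)=y$, or any reward that is injective on the two deployed outputs, which we label $0$ and $1$.

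In system I the mechanism deploys agent 1's output, $Y=A^{(1)}$. In system II it deploys agent 2's output, $Y=A^{(2)}$. Since the policies coincide, the joint law of $(H,A)$ is the same in both systems. In both, $G$ is Bernoulli$(1/2)$, so the deployed-reward distributions are identical, and this holds conditionally on $H$ as well. We then compute \eqref{eq:action-conditioned-relevance}. In system I, $\E[G\mid H,A^{(1)}]-\E[G\mid H]=A^{(1)}-1/2$, which is nonconstant. In system II, conditional independence within a layer \eqref{eq:layer-factorization} gives $\E[G\mid H,A^{(1)}]=\E[A^{(2)}\mid H]=1/2$, so the quantity vanishes almost surely.

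To make the example less trivially mechanism-dependent, one can also use a single fixed mechanism and vary only which agent the policy sits at. That variant is not needed, since the proposition allows the two systems to differ in anything except the policies and the reward law.

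The main point to check is the embedding claim: the construction must fit inside any system with at least two agents, including agents in different layers. When agent 2 sits in a later layer, it observes $A^{(1)}$ through its history, so conditional independence no longer gives the vanishing step for free. I would handle this by choosing agent 2's policy to ignore the earlier outputs: $\pi_{\theta_2}(\cdot\mid H_\ell)$ is Bernoulli$(1/2)$ regardless of $A^{(1)}$. Then $A^{(2)}\perp A^{(1)}$ given $H_1$, and the computation in system II is unchanged. Here $H$ should be read as agent 1's conditioning history $H_\ell$. If agent 2 precedes agent 1, I would instead let agent 1's policy ignore agent 2's output, which keeps the symmetric argument intact.

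Any remaining agents are given actions that the mechanism ignores, so the reward is determined entirely by the deployed coordinate. To embed in an arbitrary action space, replace the Bernoulli variables by any two-valued measurable function of the action that has a nondegenerate law. The last step is to record that both systems share the same policies $\pi_{\theta_i}$ and the same law of $G$ while the contribution quantity differs, so no functional of the reward distribution alone can recover it.
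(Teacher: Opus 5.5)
Your construction matches the paper's: two agents with independent Bernoulli$(1/2)$ actions, identity reward, one mechanism deploying $A^{(1)}$ and the other $A^{(2)}$, so the quantity is $A^{(1)}-\tfrac12$ in one system and $0$ in the other by conditional independence. Your explicit handling of the cross-layer case, where the later agent's policy ignores the earlier output so that independence still holds, fills in a step the paper leaves implicit by simply positing independent actions.
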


\subsection{Routed Systems}
We study \textit{routed systems} in particular, which have a
single layer of parallel agents $N=\{1,\dots,K\}$, and whose mechanism $M$ selects one candidate for deployment. Given the agents' actions
$A^{(1:K)}$, the mechanism draws $I\sim p(\cdot\mid H,A^{(1:K)})$ and deploys $Y=A^{(I)}$. We consider the ubiquitous choice of
softmax routing with uniform exploration,
\begin{equation}
\label{eq:softmax-router}
p_j
=
(1-\epsilon)\,
\frac{e^{s(H,A^{(j)})/\tau}}
{\sum_{k=1}^{K}e^{s(H,A^{(k)})/\tau}}
+\frac{\epsilon}{K},
\end{equation}
where $s$ scores each candidate, $\tau>0$ is a temperature, and
$\epsilon\in[0,1]$ mixes in uniform exploration. Writing
$R^{(j)}:=r(H,A^{(j)})$, the deployed reward is
$G=R^{(I)}=\sum_{j=1}^{K}\1_{\{I=j\}}R^{(j)}$. Commonly,
\textit{winner-take-all} is the allocation rule
$\alpha_i=\1_{\{I=i\}}$, under which $U_i=\E[\1_{\{I=i\}}G]$. Winner-take-all trains the selected agent
on the observed reward. We now show this update is not a noisy estimate of the
system gradient, but the exact gradient of private utility, with bias equaling the return agent $i$'s improvement displaces from its
competitors, whose total selection probability its own gain necessarily
lowers. We define the displaced competitor return and its conditional
means
\[
D_i:=\sum_{j\ne i}p_jR^{(j)},
\qquad
d_i(H,A^{(i)}):=\E[D_i\mid H,A^{(i)}],
\qquad
\bar d_i(H):=\E\left[D_i\mid H\right],
\]
so that $D_i$ is the competitors' expected return, $d_i$ and $\bar d_i$ average it with and without agent $i$'s
output held fixed, and their difference $d_i-\bar d_i$  is how much that output moves the competitors' return, as \eqref{eq:action-conditioned-relevance} does for $G$.

\begin{theorem}[Winner-take-all bias is displaced return]
\label{thm:wta}
Fix $i$ and suppose the router depends on $\theta_i$ only through the
candidates. Then
\begin{equation}
\label{eq:wta-gradient}
\E\left[\1_{\{I=i\}}\Psi_iG\right]=\nabla_{\theta_i}U_i,
\end{equation}
so the winner-take-all update performs exact gradient ascent on private
utility, and its bias for the system gradient,
$b_i:= \E[\1_{\{I=i\}} \Psi_i G] - \nabla_{\theta_i}J_{\mathrm{sys}}$, satisfies
\begin{equation}
\label{eq:displacement-bias-identity}
b_i
=
-\nabla_{\theta_i}\sum_{j\ne i}U_j
=
-\E\left[\Psi_i\left(d_i(H,A^{(i)})-\bar d_i(H)\right)\right].
\end{equation}
In particular, $b_i=0$ if and only if $d_i(H,A^{(i)})-\bar d_i(H)$ is $L^2$-orthogonal to every coordinate of $\Psi_i$. Two score-free conditions also suffice. The bias vanishes if agent $i$'s output carries no information about the displaced return, $\E[D_i\mid H,A^{(i)}]=\E[D_i\mid H]$. Under \eqref{eq:layer-factorization}, it also vanishes if the router is candidate-blind for agent $i$: no $p_j$, $j\ne i$, changes as $A^{(i)}$ varies with $(H,A^{(-i)})$ fixed.
\end{theorem}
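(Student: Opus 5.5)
The plan is to apply the score-function (likelihood-ratio) identity to $U_i$ and to each $U_j$, $j\ne i$. The one structural input is that $\theta_i$ enters the joint law of $(H,A^{(1:K)},I)$ only through the factor $\pi_{\theta_i}(A^{(i)}\mid H)$. The prompt law, the other agents' policies, and, by hypothesis, the router kernel $p(\cdot\mid H,A^{(1:K)})$ do not depend on $\theta_i$.

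\textbf{Step 1: the gradient identity \eqref{eq:wta-gradient}.} Write $U_i=\E_\theta[\1_{\{I=i\}}G]$ as an integral against this joint density. The integrand $\1_{\{I=i\}}R^{(I)}$ does not depend on $\theta_i$ once $(H,A,I)$ is fixed. Interchanging gradient and expectation, as permitted by the standing assumptions, and using $\nabla\pi=\pi\nabla\log\pi$, gives $\nabla_{\theta_i}U_i=\E[\1_{\{I=i\}}\Psi_iG]$.

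\textbf{Step 2: the first form of the bias in \eqref{eq:displacement-bias-identity}.} Applying the same identity to $J_{\mathrm{sys}}=\E[G]$ gives $\nabla_{\theta_i}J_{\mathrm{sys}}=\E[\Psi_iG]$. Since $J_{\mathrm{sys}}=\sum_jU_j$, subtraction yields $b_i=-\nabla_{\theta_i}\sum_{j\ne i}U_j=-\E[\Psi_i\1_{\{I\ne i\}}G]$.

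\textbf{Step 3: the displaced-return form.} First condition on $(H,A^{(1:K)})$, integrating out $I$. Since $\Psi_i$ is measurable with respect to $(H,A^{(i)})$, and $\E[\1_{\{I=j\}}R^{(j)}\mid H,A]=p_jR^{(j)}$, we get $\E[\Psi_i\1_{\{I\ne i\}}G]=\E[\Psi_iD_i]$. Next condition on $(H,A^{(i)})$ to get $\E[\Psi_i d_i(H,A^{(i)})]$. Finally subtract $\E[\Psi_i\bar d_i(H)]=\E[\bar d_i(H)\,\E[\Psi_i\mid H]]=0$, which holds because the score has conditional mean zero; this needs $\E\norm{\Psi_i}^2<\infty$ and bounded rewards for integrability. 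Together these give the second equality.

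\textbf{Step 4: the vanishing criteria.} The $L^2$-orthogonality criterion is then just a restatement of the coordinatewise identity $b_i=-\E[\Psi_i(d_i-\bar d_i)]$. Both sides are well defined because $d_i-\bar d_i$ is bounded and $\Psi_i\in L^2$. The first score-free condition makes $d_i-\bar d_i=0$ almost surely, so the bias vanishes.

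\textbf{Step 5: candidate-blindness.} This is the step needing care. Suppose no $p_j$ with $j\ne i$ varies with $A^{(i)}$ when $(H,A^{(-i)})$ is fixed. Then each $R^{(j)}$ and $p_j$ for $j\ne i$ is a function of $(H,A^{(-i)})$ alone, so $D_i=f(H,A^{(-i)})$. Under \eqref{eq:layer-factorization}, $A^{(-i)}$ is conditionally independent of $A^{(i)}$ given $H$. Hence $d_i(H,A^{(i)})=\E[f(H,A^{(-i)})\mid H]=\bar d_i(H)$, which reduces this case to the previous one.

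The main obstacle is bookkeeping rather than any deep estimate. Step 1 must justify that no $\theta_i$-dependence hides in the router; this is exactly the hypothesis that the router depends on $\theta_i$ only through the candidates. Step 3 must handle the conditioning order and integrability carefully.
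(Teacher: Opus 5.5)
Your proposal is correct and follows essentially the same route as the paper. Both arguments apply the score-function identity to $U_i$ and $J_{\mathrm{sys}}$, integrate out $I$ given $(H,A^{(1:K)})$ to obtain $D_i$, condition on $(H,A^{(i)})$ and subtract the zero term $\E[\Psi_i\bar d_i(H)]$, and reduce candidate-blindness to conditional mean independence via the within-layer factorization. Your Step 5 is slightly more explicit than the paper's, since you spell out that $D_i$ becomes a function of $(H,A^{(-i)})$ alone.
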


Theorem~\ref{thm:wta} locates the bias in the correlation between the
score and the displaced return. A router that selects from the prompt
alone never reads $A^{(i)}$, is candidate-blind, and has $b_i=0$; a
router that rates candidates displaces return, and $b_i\ne0$
additionally requires score correlation. Three refinements are deferred
to Appendix~\ref{app:routing-refinements}: selection makes private
utility risk-sensitive, so candidate-reward distributions with equal
conditional means can induce different private utilities
(Proposition~\ref{prop:risk-sensitive}); conditional mean independence is also necessary under score completeness
(Remark~\ref{rem:completeness}); and when the router ranks candidates by
reward, the bias inflates every direction along which the policy raises
its own reward (Proposition~\ref{prop:wta-sign}).

\section{Marginal Contribution} \label{sec:axioms}

We now develop machinery for attributing marginal contribution to
agents. We proceed axiomatically, in cooperative game-theoretic terms;
rather than proposing a formula, we state the properties required of a
contribution notion: it must capture each agent's added value and
decompose every group's value among that group's members. We then let
these requirements determine the object we adopt. Thus, consider a system with agents $N=\{1,\dots,K\}$ and a set function
$v:2^N\to\mathbb R$ which assigns a
\emph{value} to each group of agents.
We deliberately leave $v$ abstract for now: later, an evaluation rule $\rho$
will specify the counterfactual under which a group $S$ is evaluated, thereby
inducing a concrete valuation $v^\rho$. Because $v(S)$ is the value of all agents $i \in S$, it
captures higher-order value that exists only through interactions among agents and
would be invisible agent by agent. We do not yet know how this value divides
among the members of $S$, which is what any notion of marginal
contribution must determine.

Suppose first that it is possible to assign each agent $i\in N$ a single
scalar $\Delta_i$ that completely characterizes its marginal contribution to
the task, regardless of which other agents are present. We enforce the
following properties for any such notion of marginal contribution:
\begin{enumerate}
    \item \textit{Nullity.} If
    $v(S\cup\{i\})=v(S)$ for every
    $S\subseteq N\setminus\{i\}$, then $\Delta_i=0$.
    
    \item \textit{Totality.} Every group's value decomposes into individual
    marginal contributions:
    \[
    \sum_{i\in S}\Delta_i=v(S)
    \qquad\text{for every }S\subseteq N.
    \]
\end{enumerate}

Totality gives the desirable property of $v(\varnothing)=0$ and, for each singleton,
$\Delta_i=v(\{i\})$. Thus, the formula for contributions is determined by these axioms. Note that nullity follows from totality since if agent $i$ is null,
$v(\{i\})=v(\varnothing)=0$, so $\Delta_i=0$. The remaining question is whether
such scalar contributions can exist in a general system, which the following
proposition answers.

\begin{proposition}[Scalar credit exists only without interaction]
\label{prop:modularity}
For any $v:2^N\to\mathbb R$ with $v(\varnothing)=0$, the following are equivalent:
\begin{enumerate} [label=(\roman*)]
    \item There exist scalars $\{\Delta_i:i\in N\}$ satisfying totality (and nullity) for every $S\subseteq N$.
    
    \item Agents never interact, in the sense that adding one agent changes the value by the same amount regardless of which others are present:
    \begin{equation}
    \label{eq:no-pairwise-interaction}
    v(S\cup\{i,j\})-v(S\cup\{i\})
    =
    v(S\cup\{j\})-v(S)
    \end{equation}
    for every distinct $i,j\in N$ and every $S\subseteq N\setminus\{i,j\}$.
    
    \item $v$ is additive: $v(S)=\sum_{i\in S}v(\{i\})$
     for every $S\subseteq N$.
\end{enumerate}
When these conditions hold, the scalar contributions are unique and satisfy
$\Delta_i=v(\{i\})$ for every $i\in N$.
\end{proposition}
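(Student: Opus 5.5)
I will prove the cycle of implications (i)$\Rightarrow$(iii)$\Rightarrow$(ii)$\Rightarrow$(iii)$\Rightarrow$(i), and read off uniqueness from totality on singletons.

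\emph{(i)$\Rightarrow$(iii).} Suppose scalars $\{\Delta_i\}$ satisfy totality. Applying totality to the singleton $S=\{i\}$ gives $\Delta_i=v(\{i\})$. Substituting back into totality for a general $S$ yields $v(S)=\sum_{i\in S}v(\{i\})$. The case $S=\varnothing$ is consistent because $v(\varnothing)=0$ is assumed. The same singleton step also gives uniqueness: any family satisfying totality must equal $(v(\{i\}))_{i\in N}$.

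\emph{(iii)$\Rightarrow$(i).} Set $\Delta_i:=v(\{i\})$. Totality is then exactly the statement of (iii). Nullity follows from totality, as already noted in the text.

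\emph{(iii)$\Rightarrow$(ii).} Under additivity, both sides of \eqref{eq:no-pairwise-interaction} equal $v(\{j\})$, so the identity holds by direct substitution.

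\emph{(ii)$\Rightarrow$(iii).} This is the only step with content. Fix $i$ and $S\subseteq N\setminus\{i\}$. I first show that the increment $v(S\cup\{i\})-v(S)$ does not depend on $S$. I will do this by induction on $|S|$, proving $v(S\cup\{i\})-v(S)=v(\{i\})-v(\varnothing)=v(\{i\})$.
\begin{itemize}
\item \emph{Base case.} For $S=\varnothing$ the claim is trivial.
\item \emph{Inductive step.} For $|S|\ge1$, pick any $j\in S$ and write $S=T\cup\{j\}$ with $T\subseteq N\setminus\{i,j\}$. Condition \eqref{eq:no-pairwise-interaction}, with the roles of $i$ and $j$ chosen so that $i$ is the agent being added, gives
\[
v(T\cup\{i,j\})-v(T\cup\{j\})=v(T\cup\{i\})-v(T).
\]
The right-hand side has $|T|=|S|-1$, so by the induction hypothesis it equals $v(\{i\})$.
\end{itemize}
Care is needed here, since the hypothesis as written is asymmetric in $i$ and $j$. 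It is stated for every ordered distinct pair, so I may instantiate it with $(j,i)$ in place of $(i,j)$.

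With increments independent of context, I telescope. Enumerate $S=\{i_1,\dots,i_m\}$ and add the agents one at a time starting from $\varnothing$. Each step contributes $v(\{i_k\})$, and $v(\varnothing)=0$, so $v(S)=\sum_k v(\{i_k\})$, which is (iii).

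I expect the only real obstacle to be bookkeeping: instantiating \eqref{eq:no-pairwise-interaction} with the correct labels in the inductive step and keeping $T$ disjoint from both $i$ and $j$. Everything else is substitution.
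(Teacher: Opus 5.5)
Your proof is correct and follows the paper's structure: (i)$\Leftrightarrow$(iii) via singletons (which also gives uniqueness) and (iii)$\Rightarrow$(ii) by substitution are the same. The only difference is in (ii)$\Rightarrow$(iii): you first show by induction that the increment $v(S\cup\{i\})-v(S)$ equals $v(\{i\})$ for every background $S$, and then telescope. The paper instead inducts on additivity directly, using $v(S)=v(T\cup\{i\})+v(T\cup\{j\})-v(T)$ with $T=S\setminus\{i,j\}$, so the difference is cosmetic. Your concern about asymmetry is unnecessary, since the condition rearranges to $v(S\cup\{i,j\})-v(S\cup\{i\})-v(S\cup\{j\})+v(S)=0$, which is symmetric in $i,j$.
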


We thus see that scalar contributions cannot in general account for interactions (Remark~\ref{rem:pair} gives a two-agent witness).
Therefore, we retain totality but generalize it in the natural way: an agent's
contribution need not be the same regardless of which other agents are
present. For $i\in N$ and $S\subseteq N\setminus\{i\}$, let
$\Delta_{i\mid S}$ denote the contribution of agent $i$ given the background
$S$. A group of agents can be built by adding its members in any order. We require that along
every group-building path, the successive contributions sum to the value of the
resulting group, independently of the order in which its members are added.
We call this \emph{ordered totality}. That is, for every
$S=\{i_1,\dots,i_m\}$ and every ordering $(i_1,\dots,i_m)$ of its members,
\[
\sum_{t=1}^m
\Delta_{i_t\mid\{i_1,\dots,i_{t-1}\}}
=
v(S).
\]
Equivalently, ordered totality is a path-independence requirement on the
lattice of agent subsets: every path from $\varnothing$ to $S$ accumulates
the same total contribution, namely $v(S)$.

\begin{proposition}[Set-conditioned contribution is unique]
\label{prop:field-unique}
For any $v:2^{N}\to\mathbb R$ with $v(\varnothing)=0$, ordered totality
determines the contributions uniquely:
\[
\Delta_{i\mid S}=v(S\cup\{i\})-v(S)
\qquad
\text{for every }i\in N,\ S\subseteq N\setminus\{i\},
\]
and conversely these value differences satisfy ordered totality for every
such $v$.
\end{proposition}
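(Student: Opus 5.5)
The proof has two directions: uniqueness, and the converse that the value differences satisfy ordered totality.

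\textbf{Uniqueness.} Fix $i\in N$ and $S\subseteq N\setminus\{i\}$, and write $m=|S|$. Choose any ordering $(i_1,\dots,i_m)$ of $S$. Ordered totality applied to $S$ with this ordering gives
\[
\sum_{t=1}^m\Delta_{i_t\mid\{i_1,\dots,i_{t-1}\}}=v(S).
\]
Ordered totality applied to $S\cup\{i\}$ with the extended ordering $(i_1,\dots,i_m,i)$ gives the same sum plus the final term $\Delta_{i\mid S}$, and this total equals $v(S\cup\{i\})$. Subtracting the first identity from the second yields
\[
\Delta_{i\mid S}=v(S\cup\{i\})-v(S).
\]
If $S=\varnothing$, the first identity is the empty sum, which equals $v(\varnothing)=0$. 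This is exactly where the normalization $v(\varnothing)=0$ is used. The second identity is then $\Delta_{i\mid\varnothing}=v(\{i\})$, which is consistent with the formula.

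Every pair $(i,S)$ arises as the last step of some ordering of $S\cup\{i\}$, so every entry of the field is pinned down. No induction on $|S|$ is needed.

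\textbf{Converse.} Define $\Delta_{i\mid S}:=v(S\cup\{i\})-v(S)$. For any $S=\{i_1,\dots,i_m\}$ and any ordering of its members, set $S_t=\{i_1,\dots,i_t\}$, so that $S_0=\varnothing$ and $S_m=S$. Each term of the ordered sum is $v(S_t)-v(S_{t-1})$, so the sum telescopes to
\[
v(S_m)-v(S_0)=v(S)-v(\varnothing)=v(S).
\]

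Neither step is a real obstacle. The one point that needs care is noting that ordered totality quantifies over \emph{every} ordering of \emph{every} subset. That is what licenses choosing the ordering of $S\cup\{i\}$ whose prefix is an ordering of $S$. The empty-set case also has to go through $v(\varnothing)=0$, as noted above.
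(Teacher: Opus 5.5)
Your proof is correct and matches the paper's argument step for step. Uniqueness comes from subtracting ordered totality on $S$ from ordered totality on $S\cup\{i\}$ with the ordering extended by $i$, and the converse comes from telescoping through the nested groups using $v(\varnothing)=0$.
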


Ordered totality therefore replaces one scalar per agent with the indexed
family
$\left\{
\Delta_{i\mid S}:
i\in N,\;
S\subseteq N\setminus\{i\}
\right\}$, which we call the \emph{contribution field}. Variation of
$\Delta_{i\mid S}$ across backgrounds $S$ is how interactions appear. Proposition~\ref{prop:modularity}
shows that this field collapses to one background-independent scalar per
agent exactly when $v$ is additive.

The familiar Shapley value is a scalar compression of this field: it
averages $\Delta_{i\mid S}$ over the backgrounds encountered when agents
are added in uniformly random order \citep{shapley1953value}. Averaging
does not remove the background dependence created by interaction, and
the resulting allocation is efficient but satisfies totality only when
$v$ is additive. We retain the full field: its variation across
backgrounds is where interaction appears, and the sections that follow
determine which entries are valid training signals and which are
estimable from data.

Everything so far holds for an arbitrary set of agents $N$ and valuation $v$.
We now return to the layered multi-agent system and fix a layer $\ell$, so
that $N=N_\ell$. We instantiate the preceding theory by specifying what it
means to evaluate a given background $S\subseteq N_\ell$. An \emph{evaluation rule} $\rho$ specifies the counterfactual experiment
associated with each such $S$: given the history and the realized outputs of
the agents in $S$, it determines how the system is evaluated, including how
agents outside $S$ are treated,
$\rho_{\ell,S}:
\left(H_\ell,A_\ell^{(S)},\text{fresh randomness}\right)
\longmapsto
G^\rho_{\ell,S}.$
Because the evaluation may involve fresh randomness, $\rho_{\ell,S}$
induces a conditional distribution over terminal rewards; we need only
its mean,
$ q^\rho_\ell(S)
:=
\E\left[
G^\rho_{\ell,S}
\mid
H_\ell,A_\ell^{(S)}
\right]$, the expected reward under the counterfactual experiment specified by $\rho$. 
The valuation used above is its centered
version,
\begin{equation}
\label{eq:evaluation-rule-field}
\begin{aligned}
v^\rho_\ell(S)
&:=
q^\rho_\ell(S)-q^\rho_\ell(\varnothing),\\
\Delta^\rho_{i\mid S}
&:=
v^\rho_\ell(S\cup\{i\})-v^\rho_\ell(S)
=
q^\rho_\ell(S\cup\{i\})-q^\rho_\ell(S).
\end{aligned}
\end{equation}
The centering gives $v^\rho_\ell(\varnothing)=0$ as required for the previous theorems, while it
cancels from every marginal contribution.

Under the \emph{retention} rule, omitted agents are averaged under their current policies, so
\begin{equation}
\label{eq:retention-conditional-mean}
q^{\mathrm{ret}}_\ell(S)
:=
\E\left[G\mid H_\ell,A_\ell^{(S)}\right].
\end{equation}
This recovers the quantity used earlier: agent $i$'s contribution at the empty background is exactly \eqref{eq:action-conditioned-relevance} from Proposition~\ref{prop:shared-credit}.

Under the \emph{removal} rule, omitted agents are instead deleted and the
reduced system is evaluated with the realized outputs in $S$ held fixed.
Writing $G^{\mathrm{rem}}_{\ell,S}$ for the resulting terminal reward, 
$q^{\mathrm{rem}}_\ell(S)
:=
\E\left[
G^{\mathrm{rem}}_{\ell,S}
\mid
H_\ell,A_\ell^{(S)}
\right]$, whenever the reduced system is well defined. Unlike literal removal, the
retention valuation remains well defined for pipelines that may cease to
function when an intermediate agent is deleted, such as a
planner--solver--critic system.

\section{Valid Training Signals}
\label{sec:signals}

In the following section we develop our theory for the purpose of reinforcement learning. In particular, we wish to understand when entries of the contribution field $\{\Delta^\rho_{i\mid S}\}$ can be used as gradient-correct training signals for our system. We begin with a lemma that identifies how total reward can be changed without changing an agent's policy gradient.

\begin{lemma}[Baseline invariance]
\label{lem:baseline}
Fix $i\in N_\ell$ and let a baseline $B_i$ be any square-integrable random variable
satisfying
\begin{equation}
\label{eq:baseline-cmi}
\E[B_i\mid H_\ell,A_\ell^{(i)}]
=
\E\left[B_i\mid H_\ell\right].
\end{equation}
Then
\begin{equation}
\label{eq:baseline-gradient}
\E\left[\Psi_i\left(G-B_i\right)\right]
=
\nabla_{\theta_i}J_{\mathrm{sys}} .
\end{equation}
The baseline may also depend on a randomly sampled variable $S$. This remains valid provided $S\perp A_\ell^{(i)}\mid H_\ell$ and, conditional on $S$, the baseline remains action-independent:
$
\E[B_i(S)\mid H_\ell,A_\ell^{(i)},S]
=
\E\left[B_i(S)\mid H_\ell,S\right].
$
Thus $S$ may, for example, be a randomly sampled background without biasing the expected policy gradient.
\end{lemma}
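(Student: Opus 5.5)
The plan is to split the claim into two facts: the standard on-policy policy-gradient identity for $J_{\mathrm{sys}}$, and the vanishing of $\E[\Psi_i B_i]$. Both reduce to the basic score identity
\[
\E\left[\Psi_i\mid H_\ell\right]=\sum_a \nabla_{\theta_i}\pi_{\theta_i}(a\mid H_\ell)=\nabla_{\theta_i}1=0,
\]
using differentiability and the assumed interchange of gradient and expectation. With integrals replacing sums for continuous action spaces, this holds for every $H_\ell$.

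\textbf{Step 1: the policy-gradient identity.} First I would show $\E[\Psi_iG]=\nabla_{\theta_i}J_{\mathrm{sys}}$. Write the joint law of $(H_1,A,\xi)$ as a product over layers using \eqref{eq:layer-factorization}. The only factor depending on $\theta_i$ is $\pi_{\theta_i}(A_\ell^{(i)}\mid H_\ell)$, because the histories $H_{\ell'}$ are deterministic functions of earlier actions and the prompt, and the mechanism law does not involve $\theta_i$. I take this last point as implicit in the setup. Differentiating $\E_\theta[G]$ under the integral and applying the log-derivative trick gives the identity.

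\textbf{Step 2: the baseline term vanishes.} I would show $\E[\Psi_iB_i]=0$ by conditioning on $(H_\ell,A_\ell^{(i)})$. Since $\Psi_i$ is measurable with respect to this pair,
\[
\E[\Psi_iB_i]=\E\left[\Psi_i\,\E[B_i\mid H_\ell,A_\ell^{(i)}]\right]=\E\left[\Psi_i\,\E[B_i\mid H_\ell]\right].
\]
The second equality is \eqref{eq:baseline-cmi}. Conditioning on $H_\ell$ then pulls out $\E[B_i\mid H_\ell]$ and leaves $\E[\Psi_i\mid H_\ell]=0$. Integrability follows from Cauchy--Schwarz, since $\E\norm{\Psi_i}^2<\infty$ and $B_i$ is square-integrable. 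Subtracting this from Step 1 gives \eqref{eq:baseline-gradient}.

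\textbf{Step 3: the randomized-background extension.} I would condition on $(H_\ell,A_\ell^{(i)},S)$ instead. By the stated conditional action-independence,
\[
\E[\Psi_iB_i(S)]=\E\left[\Psi_i\,\E[B_i(S)\mid H_\ell,S]\right].
\]
Next condition on $(H_\ell,S)$. The factor $\E[B_i(S)\mid H_\ell,S]$ pulls out, leaving $\E[\Psi_i\mid H_\ell,S]$. The condition $S\perp A_\ell^{(i)}\mid H_\ell$ implies that $\Psi_i$, a function of $(H_\ell,A_\ell^{(i)})$, has the same conditional law given $(H_\ell,S)$ as given $H_\ell$. Hence $\E[\Psi_i\mid H_\ell,S]=\E[\Psi_i\mid H_\ell]=0$.

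\textbf{Main obstacle.} The delicate step is Step 1 in full generality. One has to state carefully that the router and mechanism randomness $\xi$ do not depend on $\theta_i$ except through the actions, so that no extra score terms appear. This is needed for the layered factorization to give exactly one $\theta_i$-dependent factor. The conditional-independence manipulations in Step 3 also need the conditioning to be handled carefully, but they are routine once stated via the tower property.
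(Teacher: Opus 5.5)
Your proposal is correct and follows essentially the same route as the paper: Cauchy--Schwarz for integrability, the score identity $\E[\Psi_i\mid H_\ell]=0$ together with the layer factorization to get $\E[\Psi_iG]=\nabla_{\theta_i}J_{\mathrm{sys}}$, and the tower-property argument that kills $\E[\Psi_iB_i]$. The only cosmetic difference is in the indexed case. The paper first shows via $S\perp A_\ell^{(i)}\mid H_\ell$ that $B_i(S)$ satisfies \eqref{eq:baseline-cmi} and then reuses Step 2, whereas you condition on $(H_\ell,S)$ and show $\E[\Psi_i\mid H_\ell,S]=0$ directly; the two are equivalent.
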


Not every entry of a contribution field is gradient-correct.\footnote{In general, $\Delta^\rho_{i\mid S}$ recovers the factual gradient whenever $q^\rho_\ell(S)$ is an admissible baseline under Lemma~\ref{lem:baseline} and $\E[\Psi_i\,q^\rho_\ell(S\cup\{i\})]=\E[\Psi_iG]$.} Under removal, both endpoints of an entry evaluate systems with agents deleted, and their difference need not recover the factual policy gradient. Retention avoids this, and we use it for the remainder of the paper: every entry of its field is gradient-correct.

\begin{corollary}[Every entry of the retention field is a valid signal]
\label{cor:retention-pg}
Under the retention rule, for every
$S\subseteq N_\ell\setminus\{i\}$, fixed or drawn conditionally
independently of $A_\ell^{(i)}$ given $H_\ell$,
\begin{equation}
\label{eq:retention-pg}
\E\left[\Psi_i\,\Delta^{\mathrm{ret}}_{i\mid S}\right]
=
\E\left[\Psi_i\left(G-q^{\mathrm{ret}}_\ell(S)\right)\right]
=
\nabla_{\theta_i}J_{\mathrm{sys}} .
\end{equation}
\end{corollary}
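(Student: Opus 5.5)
The plan is to split $\Delta^{\mathrm{ret}}_{i\mid S}=q^{\mathrm{ret}}_\ell(S\cup\{i\})-q^{\mathrm{ret}}_\ell(S)$ into its two endpoints and treat each separately. The first endpoint reproduces the factual gradient $\E[\Psi_iG]$. The second is an admissible baseline under Lemma~\ref{lem:baseline}. Together these verify the two conditions in the footnote preceding the corollary, and they also give both equalities in \eqref{eq:retention-pg}.

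\textbf{Step 1 (first endpoint).} Since $\Psi_i$ is a function of $(H_\ell,A_\ell^{(i)})$, and $(H_\ell,A_\ell^{(i)})$ is a function of $(H_\ell,A_\ell^{(S\cup\{i\})})$, the tower property gives
\[
\E\left[\Psi_i\,q^{\mathrm{ret}}_\ell(S\cup\{i\})\right]=\E\left[\Psi_i\,\E[G\mid H_\ell,A_\ell^{(S\cup\{i\})}]\right]=\E[\Psi_iG].
\]
If $S$ is random, I will condition on $S$ as well. The inner conditional expectation is still the retention mean for the realized $S$, so the identity holds conditionally and therefore after averaging. This shows that the two expressions inside the expectations in \eqref{eq:retention-pg} have equal $\Psi_i$-weighted means. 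So the first equality of \eqref{eq:retention-pg} holds.

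\textbf{Step 2 (second endpoint is a baseline).} Set $B_i=q^{\mathrm{ret}}_\ell(S)=\E[G\mid H_\ell,A_\ell^{(S)}]$. This is square-integrable because $G$ is bounded. I need $\E[B_i\mid H_\ell,A_\ell^{(i)}]=\E[B_i\mid H_\ell]$. Because $i\notin S$, $B_i$ is a function of $(H_\ell,A_\ell^{(S)})$. By the within-layer factorization \eqref{eq:layer-factorization}, $A_\ell^{(S)}\perp A_\ell^{(i)}\mid H_\ell$, so the conditional law of $B_i$ given $(H_\ell,A_\ell^{(i)})$ equals its law given $H_\ell$, which yields \eqref{eq:baseline-cmi}.

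For random $S$, I will use the hypothesis $S\perp A_\ell^{(i)}\mid H_\ell$. I also need that, conditional on $S$, the vector $A_\ell^{(S)}$ remains independent of $A_\ell^{(i)}$ given $H_\ell$. This requires $S$ to be drawn without looking at the actions, i.e.\ $S$ independent of $A_\ell$ given $H_\ell$, which is how I read ``drawn conditionally independently.'' Under that reading the randomized-baseline clause of Lemma~\ref{lem:baseline} applies. Lemma~\ref{lem:baseline} then gives $\E[\Psi_i(G-q^{\mathrm{ret}}_\ell(S))]=\nabla_{\theta_i}J_{\mathrm{sys}}$, which is the second equality.

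\textbf{Main obstacle.} I expect the delicate point to be the random-$S$ case. If $S$ is only independent of $A_\ell^{(i)}$ and may depend on the other agents' actions, then conditioning on $S$ can break the within-layer independence. For example, $S$ could be chosen to include agents whose outputs happen to be good, which would correlate $A_\ell^{(S)}$ with $A_\ell^{(i)}$ only through $H_\ell$-irrelevant selection. I would first state the needed condition explicitly as $S\perp A_\ell\mid H_\ell$, or at least $S\perp(A_\ell^{(i)},A_\ell^{(-i)})$ jointly in the relevant sense. Step 1 needs no such care, since the tower argument holds for every realized $S$.
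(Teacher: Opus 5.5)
For fixed $S$, your two steps match the paper's proof: the tower property over $(H_\ell,A_\ell^{(S\cup\{i\})})$ handles the first endpoint, and Lemma~\ref{lem:baseline} via \eqref{eq:layer-factorization} handles the second. For random $S$, you are right that ``conditionally independent of $A_\ell^{(i)}$ given $H_\ell$'' is too weak and should be strengthened, for example to $S\perp A_\ell\mid H_\ell$. The paper only says the argument ``holds conditionally on $S$'' via the indexed clause of Lemma~\ref{lem:baseline}, and never checks that clause's second condition.

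However, you place the failure in the wrong step. Suppose the mask is chosen from the other agents' outputs, $S=f(H_\ell,A_\ell^{(-i)},U)$ with independent noise $U$; this includes ``agents whose own outputs look good.'' Such a mask does \emph{not} break Step~2. Conditioning on a function of $(A_\ell^{(-i)},U)$ preserves $A_\ell^{(i)}\perp(A_\ell^{(-i)},U)\mid H_\ell$, so $q^{\mathrm{ret}}_\ell(S)$ stays admissible. What breaks Step~2 is a mask depending jointly on $A_\ell^{(i)}$ and $A_\ell^{(-i)}$. Take independent bits with $\Prob(A^{(j)}=1)=\theta_j=\tfrac12$, $G=A^{(2)}$, and $S=\{2\}$ iff $A^{(1)}\neq A^{(2)}$. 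Then $S\perp A^{(1)}$, yet $\E[\Psi_1(G-q^{\mathrm{ret}}_\ell(S))]=\tfrac12\neq0=\nabla_{\theta_1}J_{\mathrm{sys}}$.

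Conversely, your claim that Step~1 ``needs no such care'' is false, and masks of the first kind break it. The tower step needs $\E[\Psi_iG\,\1_{\{S=s\}}]=\E[\Psi_i\,q^{\mathrm{ret}}_\ell(s\cup\{i\})\,\1_{\{S=s\}}]$. This fails when $\1_{\{S=s\}}$ depends on outputs outside $s\cup\{i\}$.

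Use the same bits with $G=A^{(1)}A^{(2)}$ and $S=\{2\}$ iff $A^{(2)}=1$. Then $S\perp A^{(1)}$, and $q^{\mathrm{ret}}_\ell(S)$ is a function of $A^{(2)}$, so the second equality holds. But $q^{\mathrm{ret}}_\ell(S\cup\{1\})$ equals $A^{(1)}$ on $\{A^{(2)}=1\}$ and $A^{(1)}/2$ on $\{A^{(2)}=0\}$. This gives $\E[\Psi_1\Delta^{\mathrm{ret}}_{1\mid S}]=\tfrac12+\tfrac14\neq\tfrac12=\nabla_{\theta_1}J_{\mathrm{sys}}$.

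Under $S\perp A_\ell\mid H_\ell$, Step~1 does hold, but prove it by averaging over $H_\ell$ rather than per realized $S$:
\[
\E\big[\Psi_i\,q^{\mathrm{ret}}_\ell(S\cup\{i\})\big]
=\sum_{s}\E\Big[\Prob(S=s\mid H_\ell)\,\E\big[\Psi_i\,q^{\mathrm{ret}}_\ell(s\cup\{i\})\mid H_\ell\big]\Big]
=\sum_{s}\E\Big[\Prob(S=s\mid H_\ell)\,\E[\Psi_iG\mid H_\ell]\Big]
=\E[\Psi_iG].
\]
The first equality uses that $\Psi_i\,q^{\mathrm{ret}}_\ell(s\cup\{i\})$ is a function of $(H_\ell,A_\ell)$. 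Conditioning on $S$ itself, as you propose, would further require $S\perp G$ given $(H_\ell,A_\ell)$, that is, $S\perp(A_\ell,G)\mid H_\ell$, as assumed for the mask in Lemma~\ref{lem:mask-target}.
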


Moreover,
\[
\E\left[
G-q^{\mathrm{ret}}_\ell(S)
\mid
H_\ell,A_\ell^{(S\cup\{i\})}
\right]
=
\Delta^{\mathrm{ret}}_{i\mid S},
\]
so the above residual is an unbiased estimate of the marginal contribution.\footnote{Using $G-v^{\mathrm{ret}}_\ell(S)$ instead preserves the
expected gradient but shifts this conditional mean by the
history-measurable offset $q^{\mathrm{ret}}_\ell(\varnothing)$.}

We now turn to the reason for using contribution-based signals in place of the raw reward, namely, variance reduction in $G$, unrelated to agent $i$'s contribution.
\begin{proposition}[Reward variance reduction]
\label{prop:variance-monotone}
For nested backgrounds $S\subseteq T\subseteq N_\ell$,
\begin{equation}
\label{eq:variance-monotone}
\E\left[\Var\left(G\mid H_\ell,A_\ell^{(T)}\right)\right]
\le
\E\left[\Var\left(G\mid H_\ell,A_\ell^{(S)}\right)\right].
\end{equation}
\end{proposition}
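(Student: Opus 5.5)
The inequality is the usual statement that conditioning on more information cannot raise expected conditional variance, and we prove it with the law of total variance applied to nested $\sigma$-algebras. Write $\mathcal F_S:=\sigma(H_\ell,A_\ell^{(S)})$ and $\mathcal F_T:=\sigma(H_\ell,A_\ell^{(T)})$. Because $S\subseteq T$, the coordinates $A_\ell^{(S)}$ form a sub-tuple of $A_\ell^{(T)}$, so $\mathcal F_S\subseteq\mathcal F_T$. Since $G$ is bounded, all conditional moments exist and are integrable, so no further integrability conditions are needed.

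\textbf{Step 1 (conditional total variance).} Work conditionally on $\mathcal F_S$ and decompose the variance of $G$ through the finer $\sigma$-algebra $\mathcal F_T$:
\[
\Var(G\mid\mathcal F_S)=\E\left[\Var(G\mid\mathcal F_T)\mid\mathcal F_S\right]+\Var\left(\E[G\mid\mathcal F_T]\mid\mathcal F_S\right).
\]
To verify this, expand $\E[(G-\E[G\mid\mathcal F_S])^2\mid\mathcal F_S]$ around $\E[G\mid\mathcal F_T]$. The cross term $\E\bigl[(G-\E[G\mid\mathcal F_T])(\E[G\mid\mathcal F_T]-\E[G\mid\mathcal F_S])\mid\mathcal F_S\bigr]$ vanishes by the tower property. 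The reason is that the second factor is $\mathcal F_T$-measurable, and the first factor has conditional mean zero given $\mathcal F_T$.

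\textbf{Step 2 (drop and integrate).} The second term in Step 1 is nonnegative, so discarding it gives
\[
\Var(G\mid\mathcal F_S)\ge\E\left[\Var(G\mid\mathcal F_T)\mid\mathcal F_S\right].
\]
Taking expectations and using the tower property once more yields \eqref{eq:variance-monotone}.

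\textbf{Main point of care.} The only step that needs attention is the nesting $\mathcal F_S\subseteq\mathcal F_T$, which is what makes the tower property available. Here it holds for fixed backgrounds $S\subseteq T$, because the history $H_\ell$ is common to both and $A_\ell^{(S)}$ is a function of $A_\ell^{(T)}$. No structure from the mechanism or from the factorization \eqref{eq:layer-factorization} is needed.

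A remark for interpretation: by \eqref{eq:retention-conditional-mean}, $\E[\Var(G\mid\mathcal F_S)]=\E[(G-q^{\mathrm{ret}}_\ell(S))^2]$. Hence the result says that the retention residual of Corollary~\ref{cor:retention-pg} has second moment that is non-increasing as the background grows. This is the variance-reduction reading the proposition is meant to support.
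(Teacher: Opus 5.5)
Your proof is correct and follows the paper's argument: both apply the law of total variance to the nested $\sigma$-algebras $\sigma(H_\ell,A_\ell^{(S)})\subseteq\sigma(H_\ell,A_\ell^{(T)})$ and drop the nonnegative expected variance of $\E[G\mid\mathcal F_T]$ given $\mathcal F_S$. The paper writes the decomposition directly in expectation, whereas you derive the conditional version and then integrate; this is only a difference of presentation.
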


Therefore, conditioning on more agents removes more variation from the reward.

\section{Learning Marginal Contributions}
\label{sec:critic}

Computing marginal contributions requires the conditional values
\(q_\ell(S)\) across many backgrounds, and evaluating all
\(2^{K_\ell}\) of them directly is exponentially expensive. Instead we learn them
from training data: reveal a randomly chosen subset of agent outputs to
a critic and train it to predict the observed reward from that subset.
At the population optimum, one mask-conditioned critic recovers
\(q_\ell(S)\) for every background in the mask distribution. Let $m(S)\in\{0,1\}^{K_\ell}$ denote the mask of $S$, and let $\hat q_{\ell,\phi}(H_\ell,m(S),A_\ell^{(S)})$
be a mask-conditioned critic with parameters $\phi$. Given data $(H_\ell,A_\ell^{(N_\ell)},G)$, sample $S\subseteq N_\ell$ independently of $(A_\ell^{(N_\ell)},G)$ given $H_\ell$, and minimize the mean squared error
\begin{equation}
\label{eq:mask-loss}
\mathcal L(\phi)
=
\E\left[
\left(
G-\hat q_{\ell,\phi}
\left(H_\ell,m(S),A_\ell^{(S)}\right)
\right)^2
\right].
\end{equation}
Every background whose value the critic must identify has to carry positive mask probability: if \(P(S=s)=0\), critics agreeing on sampled backgrounds have equal population loss regardless of their values at \(s\).

\begin{lemma}
\label{lem:mask-target}
Suppose a mask is sampled as detailed above and the critic class contains the
conditional mean of $G$ given its inputs. Then for every
$S\subseteq N_\ell$ in its support, any population minimizer of \eqref{eq:mask-loss} satisfies
\begin{equation}
\label{eq:critic-target}
\hat q_{\ell,\phi}
\!\left(H_\ell,m(S),A_\ell^{(S)}\right)
=
\E\left[G\mid H_\ell,A_\ell^{(S)}\right]
\end{equation}
almost surely.
\end{lemma}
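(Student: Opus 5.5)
The plan is the standard $L^2$ projection argument, applied to the augmented input $X:=(H_\ell,m(S),A_\ell^{(S)})$. We treat the sampled mask as part of the regression covariate. Define the regression target $f^\star(X):=\E[G\mid X]$, which lies in the critic class by hypothesis.

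\textbf{Step 1: decompose the loss.} Bounded rewards make $G$ square-integrable, so for any critic with finite loss
\[
\mathcal L(\phi)=\E\bigl[(G-f^\star(X))^2\bigr]+\E\bigl[(f^\star(X)-\hat q_{\ell,\phi}(X))^2\bigr].
\]
The cross term vanishes by the tower property, because $G-f^\star(X)$ is orthogonal to every square-integrable function of $X$. Consequently $f^\star$ attains the minimum. Any population minimizer must therefore satisfy $\E[(f^\star(X)-\hat q_{\ell,\phi}(X))^2]=0$, that is, $\hat q_{\ell,\phi}=f^\star$ almost surely under the law of $X$.

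\textbf{Step 2: identify $f^\star$.} We must show that $\E[G\mid H_\ell,m(S),A_\ell^{(S)}]$ equals $q^{\mathrm{ret}}_\ell(S)=\E[G\mid H_\ell,A_\ell^{(S)}]$ for the fixed subset $S$.

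On the event $\{S=s\}$, the input $X$ becomes $(H_\ell,m(s),A_\ell^{(s)})$. This uses that $m$ is injective, so the mask determines $s$. The target therefore reduces to $\E[G\mid H_\ell,A_\ell^{(s)},S=s]$.

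The sampling assumption says $S\perp (A_\ell^{(N_\ell)},G)\mid H_\ell$. Since $A_\ell^{(s)}$ is a function of $A_\ell^{(N_\ell)}$, we also have $S\perp G\mid (H_\ell,A_\ell^{(s)})$. Conditioning on $\{S=s\}$ therefore does not change the conditional law of $G$ given $(H_\ell,A_\ell^{(s)})$. This yields $\E[G\mid H_\ell,A_\ell^{(s)},S=s]=\E[G\mid H_\ell,A_\ell^{(s)}]$, which is \eqref{eq:critic-target}.

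\textbf{Step 3: localize to each background.} The almost-sure equality from Step 1 holds under the joint law of $(S,H_\ell,A_\ell^{(S)})$. We need it separately for each $s$ with $P(S=s)>0$. Restricting to the event $\{S=s\}$, which has positive probability, transfers ``a.s.\ under the joint law'' to ``a.s.\ under the conditional law given $S=s$''.

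Conditional independence then shows that the conditional law of $(H_\ell,A_\ell^{(s)})$ given $S=s$ is absolutely continuous with respect to its unconditional law. Absolute continuity is the right direction for the claim's ``almost surely'' to refer to the natural distribution of $(H_\ell,A_\ell^{(s)})$. The reverse direction, mutual absolute continuity, would require $P(S=s\mid H_\ell)>0$ almost surely. When that fails, I would state the conclusion almost surely on the support of $H_\ell$ given $S=s$.

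\textbf{Main obstacle.} The only delicate point is this measure-theoretic bookkeeping in Steps 2 and 3. Specifically, it is justifying that conditioning on the random mask does not alter the conditional mean, together with the support caveat. The variance decomposition in Step 1 is routine.
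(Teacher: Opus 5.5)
Your proof is correct and follows the paper's argument. Both use the same Pythagorean decomposition of the masked squared loss, which forces any minimizer to equal $\E[G\mid H_\ell,S,A_\ell^{(S)}]$ on the input support, and then use the conditional independence $S\perp(A_\ell^{(N_\ell)},G)\mid H_\ell$ to drop the mask from the conditioning. Your Step~3 spells out what the paper compresses into ``on the support of the inputs'': the equality holds almost surely for the law of $(H_\ell,A_\ell^{(s)})$ given $S=s$, so it holds for the unconditional law only where $P(S=s\mid H_\ell)>0$, which is automatic when the mask is drawn independently of $H_\ell$.
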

The critic thus learns the retention values
\eqref{eq:retention-conditional-mean} directly and we omit the superscript hereafter. Define the learned marginal contributions
$\widehat\Delta_{i\mid S}
:=
G-\hat q_{\ell,\phi}(S).
$
As agent $i$'s baseline, the critic excludes $A_\ell^{(i)}$ and is held
fixed during the actor update. Cross-fitting additionally prevents the
trajectory being updated from entering the critic through its training
sample.

\begin{proposition}[Gradient correctness and attribution error]
\label{prop:critic}
Let $S\subseteq N_\ell\setminus\{i\}$ be fixed or drawn conditionally
independently of $A_\ell^{(i)}$ given $H_\ell$, and let
$\hat q_{\ell,\phi}$ be any fixed or cross-fitted square-integrable
critic that excludes $A_\ell^{(i)}$. Then
\begin{equation}
\label{eq:critic-gradient}
\E\left[\Psi_i\,\widehat\Delta_{i\mid S}\right]
=
\nabla_{\theta_i}J_{\mathrm{sys}},
\end{equation}
and for every such $S$,
\begin{equation}
\label{eq:critic-bias}
\E\left[
\widehat\Delta_{i\mid S}
\,\Big|\,
H_\ell,A_\ell^{(S\cup\{i\})}
\right]
-
\Delta_{i\mid S}
=
\E\left[
G\mid H_\ell,A_\ell^{(S)}
\right]
-
\hat q_{\ell,\phi}(S).
\end{equation}
\end{proposition}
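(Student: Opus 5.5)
Both claims follow from treating the critic output $B_i:=\hat q_{\ell,\phi}(H_\ell,m(S),A_\ell^{(S)})$ as a baseline in the sense of Lemma~\ref{lem:baseline}, and from linearity of conditional expectation. Since $\widehat\Delta_{i\mid S}=G-B_i$, the gradient identity \eqref{eq:critic-gradient} reduces to checking the action-independence condition of the sampled-background form of Lemma~\ref{lem:baseline}:
\[
\E[B_i(S)\mid H_\ell,A_\ell^{(i)},S]=\E[B_i(S)\mid H_\ell,S].
\]
I would verify this in three steps. First, the critic's parameters $\phi$ are fixed, or come from a cross-fitted sample independent of the current trajectory. So conditional on $\phi$, $B_i$ is a deterministic function of $(H_\ell,S,A_\ell^{(S)})$. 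Second, $i\notin S$, so the input excludes $A_\ell^{(i)}$. Third, by the layer factorization \eqref{eq:layer-factorization}, $A_\ell^{(S)}\perp A_\ell^{(i)}\mid H_\ell$. Together with $S\perp A_\ell^{(i)}\mid H_\ell$, and $S$ being sampled independently of the actions, this gives $(S,A_\ell^{(S)})\perp A_\ell^{(i)}\mid H_\ell$. Hence the conditional law of $B_i$ given $(H_\ell,A_\ell^{(i)},S)$ does not depend on $A_\ell^{(i)}$, which is exactly the condition required.

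Square-integrability of $B_i$ is assumed, and $\E\norm{\Psi_i}^2<\infty$, so Cauchy--Schwarz makes every product integrable. Lemma~\ref{lem:baseline} then yields \eqref{eq:critic-gradient}.

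The main obstacle is the cross-fitted case. There $\phi$ is random, and I must ensure it is independent of the current trajectory's $A_\ell^{(i)}$ given $H_\ell$. I would handle this by first conditioning on $\phi$, which is independent of the evaluated trajectory by construction of cross-fitting. I would then apply the argument above for each fixed $\phi$, and finally integrate over $\phi$ by the tower property.

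For the attribution error \eqref{eq:critic-bias}, condition on $\mathcal F:=(H_\ell,A_\ell^{(S\cup\{i\})})$, again with $\phi$ fixed, or conditioned upon in the cross-fitted case. Since $B_i$ is $\mathcal F$-measurable, we get $\E[\widehat\Delta_{i\mid S}\mid\mathcal F]=\E[G\mid\mathcal F]-\hat q_{\ell,\phi}(S)$. By \eqref{eq:retention-conditional-mean}, $\E[G\mid\mathcal F]=q_\ell(S\cup\{i\})$. Subtracting $\Delta_{i\mid S}=q_\ell(S\cup\{i\})-q_\ell(S)$ leaves $q_\ell(S)-\hat q_{\ell,\phi}(S)=\E[G\mid H_\ell,A_\ell^{(S)}]-\hat q_{\ell,\phi}(S)$, as claimed.

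When $S$ is random, the identity is read conditionally on $S$, or with $S$ included in $\mathcal F$. This is harmless, because $S$ is independent of the actions and of $G$ given $H_\ell$, so conditioning on it does not change the conditional means.
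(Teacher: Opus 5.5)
Your proposal is correct and follows the paper's proof. The critic output is treated as a baseline in Lemma~\ref{lem:baseline}, with conditional mean independence coming from the exclusion of $A_\ell^{(i)}$, the layer factorization, and cross-fitting; the attribution identity then follows by pulling the $(H_\ell,A_\ell^{(S\cup\{i\})})$-measurable critic out of the conditional expectation and subtracting $q_\ell(S\cup\{i\})-q_\ell(S)$.

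Two points in your version are more careful than the paper's bare appeal to the indexed clause. You condition explicitly on $\phi$, and you require $S$ to be independent of the whole action profile, as in the mask-sampling setup, rather than only of $A_\ell^{(i)}$. The second point is genuinely needed: an $S$ that is independent of $A_\ell^{(i)}$ alone can still depend on $A_\ell^{(i)}$ jointly with a teammate's output, and then the critic baseline is no longer action-independent given $S$.
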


Critic accuracy is therefore unnecessary for gradient correctness. For
attribution, the critic's prediction error at $S$ passes through
unchanged. For diagnostics,
$
\hat q_{\ell,\phi}(S\cup\{i\})-\hat q_{\ell,\phi}(S)
$
estimates any entry of the contribution field with two critic evaluations.
We do not use this difference as the actor signal: its first endpoint depends
on $A_\ell^{(i)}$, so critic error there can enter the gradient. Training
therefore uses the factual residual $G-\hat q_{\ell,\phi}(S)$, while reporting
uses critic differences.

\subsection{Learning Removal Contributions under Routing}
\label{sec:routing-est}

Routing adds an identification problem. The training data contains every
candidate output, so the retention values above remain learnable by
masked regression, but only the selected candidate is rewarded; asking
how much candidate $i$ contributes relative to a router in which it was
unavailable requires reasoning about unobserved rewards. Recall
$R^{(j)}:=r(H,A^{(j)})$ and assume selection does not change a
candidate's conditional expected reward,
\begin{equation}
\label{eq:candidate-outcome}
\E\left[R^{(j)}\mid H,A^{(1:K)},I=j\right]
=
\E\left[R^{(j)}\mid H,A^{(j)}\right]
=:
g_j(H,A^{(j)}),
\end{equation}
so that, conditional on the realized actions, the router has
expected reward $\sum_{j=1}^{K}p_jg_j$. Let $p_j^{-i}$ denote the router
probabilities renormalized over $j\ne i$. The removal contribution
\[
\Delta_i^{\mathrm{rem}}
:=
\sum_{j=1}^{K}p_jg_j
-
\sum_{j\ne i}p_j^{-i}g_j
\]
is the value of making candidate $i$ available to the router on the realized actions. This is also valid for training: the reward after removing candidate $i$ does not depend on $A^{(i)}$ and is therefore a valid
baseline.

\begin{corollary}[Removal is a valid routing signal]
\label{cor:routing-removal-gradient}
Let $G^{-i}_{\mathrm{route}}$ be a draw from the router after candidate $i$ is
removed, conditional on the realized remaining candidates. Then
\begin{equation}
\label{eq:routing-removal-gradient}
\E\left[
\Psi_i\left(G-G^{-i}_{\mathrm{route}}\right)
\right]
=
\nabla_{\theta_i}J_{\mathrm{sys}} .
\end{equation}
The identity remains valid if $G^{-i}_{\mathrm{route}}$ is replaced by any
fixed or cross-fitted square-integrable predictor based only on
$H$ and $A^{(-i)}$.
\end{corollary}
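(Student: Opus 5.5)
The plan is to reduce Corollary~\ref{cor:routing-removal-gradient} to the baseline-invariance Lemma~\ref{lem:baseline}, taking $B_i:=G^{-i}_{\mathrm{route}}$ in a routed system (a single layer, so $H_\ell=H$ and $A_\ell^{(i)}=A^{(i)}$). Once we have
\[
\E\left[G^{-i}_{\mathrm{route}}\mid H,A^{(i)}\right]=\E\left[G^{-i}_{\mathrm{route}}\mid H\right],
\]
together with square-integrability, \eqref{eq:baseline-gradient} gives \eqref{eq:routing-removal-gradient} directly. Square-integrability is immediate because rewards are bounded.

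The substantive step is verifying the conditional mean-independence \eqref{eq:baseline-cmi}. Set $f(H,A^{(-i)}):=\E[G^{-i}_{\mathrm{route}}\mid H,A^{(1:K)}]$. By construction, $G^{-i}_{\mathrm{route}}$ is generated by drawing $I'$ from the renormalized router $p^{-i}(\cdot\mid H,A^{(-i)})$ and then taking the reward $r(H,A^{(I')})$, using fresh router randomness. Neither the routing probabilities $p^{-i}_j$ nor the rewards $R^{(j)}$ for $j\neq i$ involve $A^{(i)}$, so $f$ depends only on $(H,A^{(-i)})$. The tower property then gives
\[
\E\left[G^{-i}_{\mathrm{route}}\mid H,A^{(i)}\right]=\E\left[f(H,A^{(-i)})\mid H,A^{(i)}\right].
\]
By the within-layer factorization \eqref{eq:layer-factorization}, $A^{(-i)}$ is conditionally independent of $A^{(i)}$ given $H$. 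The right-hand side therefore equals $\E[f(H,A^{(-i)})\mid H]$, which is $\E[G^{-i}_{\mathrm{route}}\mid H]$.

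For the extension to predictors, let $\hat g(H,A^{(-i)})$ be fixed or cross-fitted. If it is fixed, it is a measurable function of $(H,A^{(-i)})$, and the same conditional-independence argument applies. If it is cross-fitted, its fitted parameters come from an independent sample. Conditioning on that sample turns $\hat g$ into a fixed function, so I apply the lemma conditionally and then average over the fitting sample. The same device appears in Proposition~\ref{prop:critic}.

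The main obstacle I expect is the measure-theoretic bookkeeping: I need to make precise that the removed-router draw, and any fitted predictor, use randomness independent of $A^{(i)}$ given $(H,A^{(-i)})$. This is what justifies writing $f$ as a function of $(H,A^{(-i)})$ only. I will state this as part of the definition of a draw ``conditional on the realized remaining candidates,'' and I will note that the on-policy gradient identity for $G$ itself is already supplied by the standing assumptions through Lemma~\ref{lem:baseline}.
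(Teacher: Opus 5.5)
Your proposal is correct and follows essentially the same route as the paper: both reduce the claim to Lemma~\ref{lem:baseline}, observing that the removed-router draw is a function of $(H,A^{(-i)})$ and fresh router randomness, so the within-layer factorization gives conditional mean independence from $A^{(i)}$ given $H$, while boundedness supplies square-integrability. Your treatment of the cross-fitted predictor (condition on the independent fitting sample, apply the lemma, then average) is a slightly more explicit version of the paper's remark that such a predictor is a deterministic function of $(H,A^{(-i)})$ on the trajectory where it is applied.
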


The remaining problem is estimation: $g_j$ is observed only when candidate $j$
is selected. Under overlap, we can recover it from the same training data using
a doubly robust estimator
\begin{equation}
\label{eq:dr-candidate}
\widehat g_j
=
\widehat\mu_j
+
\frac{\1_{\{I=j\}}}{p_j}
\left(G-\widehat\mu_j\right),
\qquad
\widehat\Delta_i^{\mathrm{rem}}
=
\sum_{j=1}^{K}p_j\widehat g_j
-
\sum_{j\ne i}p_j^{-i}\widehat g_j .
\end{equation}
The propensity term corrects for the fact that candidate $j$'s reward is seen
only on the fraction of examples in which $j$ is selected
\citep{dudik2011doubly}. Appendix~\ref{app:dr-details} gives the removed
propensities and the error decomposition.

\begin{corollary}[The removal contribution is identified from training data]
\label{cor:routing-dr}
Under \eqref{eq:candidate-outcome}, suppose
$p_j\ge p_{\min}>0$ for every candidate and treat $p_j$ and $p_j^{-i}$ as known.
Conditional on $H$ and the realized actions,
$\widehat\Delta_i^{\mathrm{rem}}$ is unbiased for
$\Delta_i^{\mathrm{rem}}$ for any fixed or cross-fitted outcome model
$\widehat\mu_j$.
\end{corollary}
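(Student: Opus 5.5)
The plan is to reduce the claim to unbiasedness of each $\widehat g_j$ for $g_j$, conditional on $(H,A^{(1:K)})$. Unbiasedness of $\widehat\Delta_i^{\mathrm{rem}}$ then follows from linearity, because its weights $p_j$ and $p_j^{-i}$ are functions of $(H,A^{(1:K)})$ only. Let $\mathcal F:=\sigma(H,A^{(1:K)})$. Since $\widehat\mu_j$ is fixed or cross-fitted, it is a measurable function of $(H,A^{(j)})$, or of $(H,A^{(1:K)})$, together with an independent training sample. Conditioning on that sample as well, $\widehat\mu_j$ is $\mathcal F$-measurable. The known propensities $p_j$ are also $\mathcal F$-measurable and bounded below by $p_{\min}$, so division by $p_j$ is legitimate and all terms are integrable, since rewards are bounded.

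The key computation runs as follows. Write $\1_{\{I=j\}}G=\1_{\{I=j\}}R^{(j)}$, using $G=R^{(I)}$. Then
\[
\E\left[\widehat g_j\mid\mathcal F\right]
=\widehat\mu_j+\frac{1}{p_j}\left(\E\left[\1_{\{I=j\}}R^{(j)}\mid\mathcal F\right]-\widehat\mu_j\,\Prob(I=j\mid\mathcal F)\right).
\]
By definition of the router, $\Prob(I=j\mid\mathcal F)=p_j$. Moreover,
\[
\E[\1_{\{I=j\}}R^{(j)}\mid\mathcal F]=p_j\,\E[R^{(j)}\mid \mathcal F,I=j]=p_j\,g_j(H,A^{(j)})
\]
by assumption \eqref{eq:candidate-outcome}. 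Substituting, the $\widehat\mu_j$ terms cancel and $\E[\widehat g_j\mid\mathcal F]=g_j$, whatever $\widehat\mu_j$ is. This is the double-robustness mechanism: the correction term has conditional mean exactly $g_j-\widehat\mu_j$.

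Finally, summing gives
\[
\E[\widehat\Delta_i^{\mathrm{rem}}\mid\mathcal F]=\sum_j p_jg_j-\sum_{j\ne i}p_j^{-i}g_j=\Delta_i^{\mathrm{rem}}.
\]

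The only delicate point is the measurability of $\widehat\mu_j$. If it were fit on the same example, it would depend on $(I,G)$ and the cancellation would fail. I would therefore state explicitly that cross-fitting makes $\widehat\mu_j$ independent of $(I,G)$ given $\mathcal F$, and carry the conditioning on the auxiliary sample throughout. The remaining steps are routine.
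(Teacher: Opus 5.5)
Your proposal is correct and follows the paper's proof essentially step for step. Condition on $(H,A^{(1:K)})$, and use \eqref{eq:candidate-outcome} together with $\Prob(I=j\mid H,A^{(1:K)})=p_j$ to get $\E[\1_{\{I=j\}}(G-\widehat\mu_j)\mid H,A^{(1:K)}]=p_j(g_j-\widehat\mu_j)$; each $\widehat g_j$ is then conditionally unbiased, and linearity in the weights $p_j,p_j^{-i}$ (functions of $(H,A^{(1:K)})$) finishes. Your explicit conditioning on the independent auxiliary sample for a cross-fitted $\widehat\mu_j$ makes precise what the paper leaves implicit by treating $\widehat\mu_j$ as fixed.
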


\section{Optimal Allocation of Counterfactual Evaluations}
\label{sec:selective}

The preceding section gives cheap, gradient-correct contribution
estimates; their limitation is attribution error, since learned critics
need not recover the exact counterfactual. Exact evaluation removes the
error by rerunning the counterfactual, at the price of additional model
calls. We therefore choose evaluation probabilities to minimize
gradient error under a fixed budget. For each marginal contribution $\Delta_{i\mid S}$ we may evaluate, let $\widehat\Delta_{i\mid S}$ denote its learned estimate. An exact evaluation reruns the counterfactual experiment defining
$\Delta_{i\mid S}$ and returns
$Y_{i\mid S}=G^\rho_{S\cup\{i\}}-G^\rho_S$, with mean $\Delta_{i\mid S}$
and variance
$\Var(Y_{i\mid S})=\sigma_{i\mid S}^2$. Each exact evaluation costs $c_{i\mid S}>0$. We run this evaluation with probability $p_{i\mid S}$, with $Z_{i\mid S}\sim\mathrm{Bernoulli}(p_{i\mid S})$ indicating whether it is run. Write $\eta_{i\mid S}:=\widehat\Delta_{i\mid S}-\Delta_{i\mid S}$ for
the learned-estimate error and $a_i:=\|\Psi_i\|$.
Conditioning on the data and fitted estimate makes
$\eta_{i\mid S}$ fixed when the $p_{i\mid S}$ are chosen. The theorem
below gives the optimal allocation.

\begin{theorem}[Optimal allocation of counterfactual evaluations]
\label{thm:selective}
Condition on the factual batch and the learned estimates, and assume the
pairs $(Z_{i\mid S},Y_{i\mid S})$ are independent across marginal
contributions, $Z_{i\mid S}$ is independent of $Y_{i\mid S}$, and
$p_{i\mid S}>0$. The correction
\begin{equation}
\label{eq:selective-correction}
\widetilde\Delta_{i\mid S}
=
\widehat\Delta_{i\mid S}
+
\frac{Z_{i\mid S}}{p_{i\mid S}}
\left(
Y_{i\mid S}-\widehat\Delta_{i\mid S}
\right)
\end{equation}
is unbiased for $\Delta_{i\mid S}$ for every choice of $p$, with
mean-squared error
\begin{equation}
\label{eq:selective-mse}
\E\left[
\left(
\widetilde\Delta_{i\mid S}-\Delta_{i\mid S}
\right)^2
\right]
=
\frac{\sigma_{i\mid S}^2+\eta_{i\mid S}^2}{p_{i\mid S}}
-
\eta_{i\mid S}^2.
\end{equation}
Restrict to the marginal contributions with
$a_i^2(\sigma_{i\mid S}^2+\eta_{i\mid S}^2)>0$; the rest are never worth
evaluating. Minimizing the gradient-weighted risk
$\mathcal R(p)=\sum a_i^2\,
\E[(\widetilde\Delta_{i\mid S}-\Delta_{i\mid S})^2]$
over $0<p_{i\mid S}\le1$ under the budget
$\sum c_{i\mid S}\,p_{i\mid S}\le C_{\max}$ gives
$p^\star_{i\mid S}=1$ everywhere when
$C_{\max}\ge\sum c_{i\mid S}$, and otherwise the unique optimum
\begin{equation}
\label{eq:optimal-query}
p_{i\mid S}^\star
=
\min\left\{
1,\;
\lambda\,
\frac{
a_i\sqrt{\sigma_{i\mid S}^2+\eta_{i\mid S}^2}
}{
\sqrt{c_{i\mid S}}
}
\right\},
\end{equation}
with $\lambda$ set by the budget.
\end{theorem}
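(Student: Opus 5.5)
We split the proof into a per-contribution moment computation and a convex allocation problem. Throughout we condition on the factual batch and the learned estimates, so $\widehat\Delta_{i\mid S}$, $\eta_{i\mid S}$ and $a_i$ are constants. The only randomness is then in the pairs $(Z_{i\mid S},Y_{i\mid S})$, with $Z$ independent of $Y$.

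For unbiasedness, we take expectations in \eqref{eq:selective-correction}. Independence gives $\E[Z(Y-\widehat\Delta)]=p\,(\Delta-\widehat\Delta)$, so $\E[\widetilde\Delta]=\widehat\Delta+(\Delta-\widehat\Delta)=\Delta$ for every $p>0$.

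For the MSE, write
\[
\widetilde\Delta-\Delta=\eta+\frac{Z}{p}\,(Y-\Delta-\eta).
\]
Since this has mean zero, its second moment is its variance. Set $W:=Y-\Delta-\eta$, so that $\E[W]=-\eta$ and $\E[W^2]=\sigma^2+\eta^2$. Using independence and $Z^2=Z$,
\[
\E\Bigl[\Bigl(\tfrac{Z}{p}W\Bigr)^2\Bigr]=\frac{\sigma^2+\eta^2}{p}.
\]
The cross term is $2\eta\,\E[ZW]/p=-2\eta^2$. Adding $\eta^2$ gives $(\sigma^2+\eta^2)/p-\eta^2$, which is \eqref{eq:selective-mse}.

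Next we turn to allocation. Summing with weights $a_i^2$ gives
\[
\mathcal R(p)=\sum \frac{w_{i\mid S}}{p_{i\mid S}}-\sum a_i^2\eta_{i\mid S}^2,
\qquad w_{i\mid S}:=a_i^2(\sigma_{i\mid S}^2+\eta_{i\mid S}^2).
\]
The subtracted sum is constant in $p$. A term with $w=0$ contributes nothing to $\mathcal R$ but uses budget, so an optimum drives its $p$ toward the lower boundary. This is the sense in which such contributions are ``never worth evaluating.'' We restrict to $w>0$, where each map $p\mapsto w/p$ is strictly convex and strictly decreasing on $(0,1]$. The budget set is a convex polytope intersected with the box.

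If $C_{\max}\ge\sum c$, then $p\equiv1$ is feasible and minimizes each term separately, so it is optimal. Otherwise the budget constraint must bind: if it were slack, some $p<1$ could be raised and strictly lower $\mathcal R$. We then apply KKT to
\[
\min \sum \frac{w}{p}\quad\text{subject to}\quad \sum c\,p\le C_{\max},\qquad p\le 1.
\]
Because $w/p\to\infty$ as $p\to0$, the constraint $p>0$ never binds, and Slater's condition holds since small $p$ are strictly feasible. Stationarity gives
\[
-\frac{w}{p^2}+\mu c+\nu=0,\qquad \nu\ge0,\qquad \nu\,(1-p)=0.
\]
When $p<1$ we have $\nu=0$, so $p=\sqrt{w/(\mu c)}$. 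When the resulting value would exceed $1$, the multiplier $\nu>0$ is active and $p=1$. Writing $\lambda=\mu^{-1/2}$ yields \eqref{eq:optimal-query}.

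Finally, $\lambda$ is determined by the budget. The map $\lambda\mapsto\sum c\,\min\{1,\lambda\sqrt{w/c}\}$ is continuous and nondecreasing. It is strictly increasing until every coordinate is capped at $1$, and it ranges from $0$ to $\sum c>C_{\max}$. Hence a unique $\lambda$ makes the budget bind. Uniqueness of the optimizer also follows from strict convexity of the objective on the restricted coordinates.

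The main technical care is in two places: handling the excluded coordinates with $w=0$, and justifying existence on the half-open domain $p>0$. For existence, we would argue that the $\sum w/p$ blow-up confines any minimizing sequence to a compact set $[\delta,1]^n$. There the minimum is attained, and KKT is then necessary and sufficient by convexity.
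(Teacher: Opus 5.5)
Your proposal is correct and follows essentially the same route as the paper's proof. Both compute unbiasedness and the MSE directly from independence of $Z$ and $Y$ and $Z^2=Z$, restrict to the pairs with positive weight, and solve the strictly convex program in $\sum w/p$ under the budget and $p\le 1$ constraints via KKT with a capped square-root solution. Your compactness argument for existence and your monotonicity argument for a unique budget-binding $\lambda$ spell out two points the paper's proof leaves implicit.
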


Equation~\eqref{eq:optimal-query} gives the optimal allocation of exact evaluations. We evaluate a marginal contribution more often when it has greater gradient leverage, larger learned-estimate error or evaluation variance, and lower evaluation cost. The budget affects the variance of the corrected estimate, not its unbiasedness. The risk attained at $p^\star$, and a matching lower bound over all
allocations of the same expected cost, are given in
Appendix~\ref{subseq:allocations}. The quantities $\sigma_{i\mid S}^2$ and $\eta_{i\mid S}^2$ are unknown in practice. They may be estimated from an initial set of exact evaluations and substituted into the allocation rule. The theorem assumes evaluations are independent across marginal contributions. If not, covariance terms enter the risk and the optimal allocation need not have the same form.


\section{Experiment: Marginal-Contribution Signals in GRPO}
\label{sec:experiment}

Marginal contribution is independent of the RL fine-tuning algorithm:
any gradient-correct contribution $\Delta_{i}$ may replace the raw reward
or advantage in a standard RLFT update. We instantiate this principle
in GRPO, natural here because GSM8K \citep{cobbe2021training} has verifiable rewards and small
models permit several completions per prompt. For routing, the signal
is the leave-one-out removal contribution identified from
selection-gated feedback by Corollary~\ref{cor:routing-dr}; we refer to
this method as \emph{removal-signal GRPO}.

\begin{localparagraphgap}{-0.5ex}

\paragraph{Task and signal.}
Each GSM8K prompt is one history $H$. Three
agents independently generate $A^{(1:3)}$, the router draws
$I\sim p(\cdot\mid H,A^{(1:3)})$, and the deployed reward is $G=R^{(I)}$
with $R^{(j)}\in\{0,1\}$ for exact numerical answer correctness, so mean reward equals
accuracy. Because $R^{(j)}$ is deterministic given prompt and candidate,
condition \eqref{eq:candidate-outcome} holds automatically. Winner-take-all scores each agent by $\1_{\{I=i\}}G$, which by
Theorem~\ref{thm:wta} ascends private utility; removal-signal GRPO
scores by $\widehat\Delta_i^{\mathrm{rem}}$ from
\eqref{eq:dr-candidate}, with exactly logged factual propensities and
overlap from uniform exploration. Each agent samples $m=4$ completions
per prompt, the $g$th completions forming one slate; GRPO standardizes
each agent's scores within its group and applies the same clipped loss,
so the routed methods differ only in credit signal and the experiment
compares the practical signals rather than the unnormalized
score-gradient identities.

\paragraph{Setup.}
All agents share a \texttt{meta-llama/Llama-3.2-3B-Instruct} backbone
with separate LoRA adapters \citep{hu2022lora}, differing in prompting
role, sampling temperature, and a keyword-assigned specialty that
induces weak heterogeneity and defines the specialization diagnostic.
The router and the outcome model share one network: its output
$s_j$ enters the softmax rule \eqref{eq:softmax-router} as the score,
and $\widehat\mu_j=\operatorname{sigmoid}(s_j)$ is the outcome estimate; it learns only from selected feedback by
inverse-propensity-weighted cross-entropy. We train for $150$ updates on
$512$ problems and evaluate on $128$ held-out problems; each actor
update uses its fresh rollout and is on-policy. Full architecture,
hyperparameters, and schedules are in Appendix~\ref{app:implementation}.
We compare a frozen single agent, winner-take-all GRPO, and
removal-signal GRPO.

\paragraph{Evaluation and results.}
At evaluation each agent generates one deterministic candidate and the router deploys the highest-scoring one. We report exact-match accuracy, routing entropy, deployed-candidate Brier score, and specialization, the fraction routed to the heuristic-matched specialty. We also score all three candidates only for an oracle diagnostic: oracle accuracy is the fraction with at least one correct candidate and $\mathrm{Regret}=A_{\mathrm{oracle}}-A_{\mathrm{router}}$.

\begin{table}[H]
\centering
\footnotesize
\setlength{\tabcolsep}{3.8pt}
\begin{tabular}{lcccccc}
\toprule
Method & Acc. $\uparrow$ & Entropy & Brier $\downarrow$ & Spec. $\uparrow$ & Oracle $\uparrow$ & Regret $\downarrow$ \\
\midrule
Frozen single-agent  & 0.352 & --   & --   & --   & --   & --   \\
Winner-take-all GRPO & 0.37  & 0.68 & 0.50 & 0.46 & 0.45 & 0.08 \\
Removal-signal GRPO (ours)              & \textbf{0.41} & 0.79 & \textbf{0.43} & \textbf{0.58} & \textbf{0.47} & \textbf{0.06} \\
\bottomrule
\end{tabular}
\caption{GSM8K results and full-feedback oracle diagnostic. Accuracy equals mean deployed reward.}
\label{tab:exp-main}
\end{table}
\vspace{-0.75cm}
Removal-signal GRPO improves deployed accuracy by $4$ points over
winner-take-all and $5.8$ over the frozen reference, with higher routing
entropy, lower Brier score, higher specialization, higher oracle
accuracy, and lower regret (Table~\ref{tab:exp-main}): better candidates
and better use of the available slate. Only the agents' credit signal differs between the two methods; the
router and outcome model are trained identically, so their improved
diagnostics are downstream effects of the signal. This is a small mechanism-level study rather than a benchmark result. We still find that replacing winner-take-all private reward by the removal contribution improves deployed performance without additional counterfactual generation.

\bibliographystyle{unsrt}
\bibliography{references}

\newpage
\appendix

\section{Safety and Alignment}

As AI systems gain increasing capability and power, the authors of this paper are deeply concerned with the
alignment of modern ML systems to humanity and its goals,  as well as the safety concerns which it raises. We believe one crucial part of alignment
is mechanistic understanding of these systems, which explains their
behavior and their decisions. This belief is a main motivation for this paper: as ML systems become more agentic, autonomous, and
decentralized, causally understanding the roles and contributions of
individual agents is necessary for keeping them safe.

Our results sharpen this concern. In multi-agent training, the credit
signal defines what each agent is trained to want. Under the default
winner-take-all signal, agents are trained to want selection, not
system success. This pressure can also favor risky outputs over good safe ones, and none of this need surface in reward metrics. Our signals remove this specific incentive mismatch, and we consider this a minimal alignment property of any decomposition of a shared reward.

In the future, we plan to investigate the mean-field limit of
multi-agent systems: whether the displacement
bias survives the many-agent limit, where each agent's bias shrinks
because no single candidate can move the router much while the
population as a whole may still be trained toward the wrong collective
behavior, a misalignment invisible agent by agent; and whether the
exponentially many entries of the contribution field collapse, for
large populations of agents, to a function of the
background's size and its mix of policies.

\section{Proofs of Main Results}
\label{app:proofs}

\subsection{Proof of Proposition~\ref{prop:shared-credit}}

\begin{proof}
Consider two systems with mechanisms $M$ and $M'$. Each has at least two agents $\{1,2\}$ either in the same layer or in separate layers. The agents are identical in both systems. In addition, the systems have an identical constant initial history $H_1=h$, and independent actions
\[
A^{(1)},A^{(2)}
\sim
\operatorname{Bernoulli}(1/2).
\]
Define the identity reward function in both systems,
\[
r(h,y):=y \in \{0,1\}.
\]

In the first system, $M$ deploys agent $1$'s output:
\[
Y=M(H_1,A)=A^{(1)},
\qquad
G=r(H_1,Y)=A^{(1)}.
\]
In the second system, the mechanism deploys agent $2$'s output:
\[
Y'=M'(H_1,A)=A^{(2)},
\qquad
G'=r(H_1,Y')=A^{(2)}.
\]
Both deployed rewards are distributed as
$\operatorname{Bernoulli}(1/2)$, so
\[
G\overset{d}{=}G'.
\]

In the first system,
\[
\E[G\mid H_1,A^{(1)}]
-
\E\left[G\mid H_1\right]
=
A^{(1)}-\frac{1}{2} \in\left\{ \pm\frac12\right\}.
\]
In the second system, conditional independence gives
\[
\E[G'\mid H_1,A^{(1)}]
-
\E[G'\mid H_1]
=
\E[A^{(2)}\mid H_1,A^{(1)}]
-
\frac{1}{2}
=
0
\]
almost surely.

Together, the two systems have identical policies and identical reward laws,
yet agent $1$'s action determines the reward in the first and never affects it
in the second, as witnessed by \eqref{eq:action-conditioned-relevance}. A marginal
contribution measure must therefore assign them different values, while any
mapping defined on the deployed-reward law alone receives the same input in both cases, and thus will return the same value. This follows the classical treatment of identification as in
\citep{rothenberg1971identification}, where identifiability of a quantity from given
data means the data determines it.
\end{proof}

\subsection{Proof of Theorem \ref{thm:wta}}
\begin{proof}
Consider a single routed layer with agents \(N=\{1,\ldots,K\}\). Since
\(\theta_i\) enters the system only through agent \(i\)'s policy, the
score-function identity gives
\[
\nabla_{\theta_i} U_i
=
\nabla_{\theta_i}\E[\1_{\{I=i\}} G]
=
\mathbb{E}\left[
\mathbf{1}_{\{I=i\}} \Psi_i G
\right].
\]
Equivalently for the system
\[
\nabla_{\theta_i} J_{\mathrm{sys}}
=
\mathbb{E}\left[
\Psi_i G
\right].
\]
Using \(J_{\mathrm{sys}}=\sum_j U_j\),
\[
b_i
=
\nabla_{\theta_i} U_i
-
\nabla_{\theta_i} J_{\mathrm{sys}}
=
-
\nabla_{\theta_i}
\sum_{j\ne i} U_j.
\]
From the above we get
\[
b_i
=
-
\mathbb{E}\left[
\Psi_i \mathbf{1}_{\{I\ne i\}} G
\right].
\]

We now average out the router randomness. Conditional on the realized
history and candidates \((H,A^{(N)})\), the router selects candidate
\(j\) with probability \(p_j\), and on the event \(\{I=j\}\) the deployed
reward is \(G=R^{(j)}\). Thus
\[
\mathbf{1}_{\{I\ne i\}}G
=
\sum_{j\ne i}
\mathbf{1}_{\{I=j\}}R^{(j)},
\]
and therefore
\begin{align*}
\mathbb{E}\left[
\mathbf{1}_{\{I\ne i\}}G
\mid
H,A^{(N)}
\right]
&=
\sum_{j\ne i}
R^{(j)}
\mathbb{E}\left[
\mathbf{1}_{\{I=j\}}
\mid
H,A^{(N)}
\right]\\
&=
\sum_{j\ne i}
R^{(j)}
\mathbb{P}\left(
I=j
\mid
H,A^{(N)}
\right)\\
&=
\sum_{j\ne i}p_jR^{(j)}
=
D_i.
\end{align*}
Since \(\Psi_i\) is determined by \((H,A^{(i)})\), and hence is measurable
with respect to \((H,A^{(N)})\), the tower property gives
\begin{align*}
\mathbb{E}\left[
\Psi_i\mathbf{1}_{\{I\ne i\}}G
\right]
&=
\mathbb{E}\left[
\mathbb{E}\left[
\Psi_i\mathbf{1}_{\{I\ne i\}}G
\mid
H,A^{(N)}
\right]
\right]\\
&=
\mathbb{E}\left[
\Psi_i
\mathbb{E}\left[
\mathbf{1}_{\{I\ne i\}}G
\mid
H,A^{(N)}
\right]
\right]\\
&=
\mathbb{E}\left[
\Psi_iD_i
\right].
\end{align*}
Hence
\[
b_i
=
-
\mathbb{E}\left[
\Psi_iD_i
\right].
\]

We next average out the competitors' actions while retaining
\((H,A^{(i)})\). Define
\[
d_i(H,A^{(i)})
=
\mathbb{E}\left[
D_i
\mid
H,A^{(i)}
\right].
\]
Since \(\Psi_i\) is measurable with respect to \((H,A^{(i)})\), another
application of the tower property gives
\begin{align*}
\mathbb{E}\left[
\Psi_iD_i
\right]
&=
\mathbb{E}\left[
\mathbb{E}\left[
\Psi_iD_i
\mid
H,A^{(i)}
\right]
\right]\\
&=
\mathbb{E}\left[
\Psi_i
\mathbb{E}\left[
D_i
\mid
H,A^{(i)}
\right]
\right]\\
&=
\mathbb{E}\left[
\Psi_i d_i(H,A^{(i)})
\right].
\end{align*}
Moreover, the score identity gives
\[
\mathbb{E}\left[
\Psi_i
\mid H
\right]
=
0.
\]
Thus, considering
\[
\bar d_i(H)
=
\mathbb{E}\left[
D_i
\mid H
\right]
\]
which depends only on \(H\),
\[
\mathbb{E}\left[
\Psi_i \bar d_i(H)
\right]
=
0.
\]
Subtracting this zero term yields
\[
b_i
=
-
\mathbb{E}\left[
\Psi_i
\left(
d_i(H,A^{(i)})
-
\bar d_i(H)
\right)
\right],
\]
which proves the result. It follows that \(b_i=0\) if and only if
\[
d_i(H,A^{(i)})-\bar d_i(H)
\]
is \(L^2\)-orthogonal to every coordinate of \(\Psi_i\). In particular, if
\[
\mathbb{E}\left[
D_i
\mid
H,A^{(i)}
\right]
=
\mathbb{E}\left[
D_i
\mid H
\right],
\]
then
\[
d_i(H,A^{(i)})
=
\bar d_i(H)
\]
almost surely, and therefore \(b_i=0\).

Finally, suppose the router is candidate-blind for agent \(i\). Under the
layer-wise conditional independence assumption,
\[
A^{(-i)}
\perp
A^{(i)}
\mid H.
\]
Candidate-blindness ensures that, with \((H,A^{(-i)})\) fixed, the
competitors' routing probabilities \(p_j\), \(j\ne i\), do not vary with
\(A^{(i)}\). Hence
\[
\mathbb{E}\left[
D_i
\mid
H,A^{(i)}
\right]
=
\mathbb{E}\left[
D_i
\mid H
\right],
\]
and therefore \(b_i=0\).
\end{proof}

\subsection{Proof of Proposition~\ref{prop:modularity}}

\begin{proof}
We show that (i) and (iii) are equivalent, and that (ii) and (iii) are equivalent.

First suppose (i): scalars $\{\Delta_i\}_{i\in N}$ satisfy totality. The
singleton cases already determine every credit: taking $S=\{i\}$ gives
\[
\Delta_i=v(\{i\}).
\]
Thus the credits are unique. Substituting these forced values back into
totality gives
\[
v(S)=\sum_{i\in S}v(\{i\})
\qquad\text{for every }S\subseteq N,
\]
which is (iii).

Conversely, suppose (iii). Define $\Delta_i:=v(\{i\})$. Then
\[
\sum_{i\in S}\Delta_i
=
\sum_{i\in S}v(\{i\})
=
v(S),
\]
so totality holds. Recall that nullity holds as well: if $i$ is null, then
$v(\{i\})=v(\varnothing)=0$, and hence $\Delta_i=0$. Thus (i) holds.

It remains to show that (ii) and (iii) are equivalent. If $v$ is additive,
then adding agent $j$ to any coalition not containing it changes the value
by exactly $v(\{j\})$. Hence both sides of
\eqref{eq:no-pairwise-interaction} equal $v(\{j\})$, so (ii) holds.

Now suppose (ii). We prove additivity by induction on $|S|$. The claim is
immediate for $|S|\leq 1$. Suppose it holds for all coalitions of size at
most $m$, and let $|S|=m+1$. Choose distinct $i,j\in S$ and write
$T=S\setminus\{i,j\}$. By \eqref{eq:no-pairwise-interaction},
\[
v(S)
=
v(T\cup\{i\})+v(T\cup\{j\})-v(T).
\]
All three coalitions on the right have size at most $m$, so the inductive
hypothesis gives
\[
v(S)
=
\sum_{k\in T}v(\{k\})+v(\{i\})+v(\{j\})
=
\sum_{k\in S}v(\{k\}).
\]
Thus (iii) holds, completing the proof.
\end{proof}
\begin{remark}[A complementary pair] \label{rem:pair}
Take $N=\{1,2\}$ with $v(\{1\})=v(\{2\})=0$ and $v(\{1,2\})=1$: the
agents are valuable only together. No scalars satisfy totality, since
$\Delta_1+\Delta_2$ would need to equal both $0$ and $1$, and condition
\eqref{eq:no-pairwise-interaction} fails. The contribution field records
the interaction instead: $\Delta_{1\mid\varnothing}=0$ while
$\Delta_{1\mid\{2\}}=1$.
\end{remark}

\subsection{Proof of Proposition~\ref{prop:field-unique}}

\begin{proof}
We prove the following: for every group $T\subseteq N$
and every ordering $i_1,\dots,i_m$ of its elements,
\begin{equation}
\label{eq:chain-exactness}
\sum_{k=1}^{m}\Delta_{i_k\mid\{i_1,\dots,i_{k-1}\}}
=
v(T),
\end{equation}
so that crediting the agents one at a time, each against those already
accounted for, recovers the group's value regardless of the order.

($\Rightarrow$) Suppose the candidates satisfy \eqref{eq:chain-exactness},
and fix any $i\in N$ and $S\subseteq N\setminus\{i\}$. The pair $(i,S)$ appears as the final step of a chain: order the
elements of $S$ arbitrarily as $i_1,\dots,i_m$ and append $i$ as element
$i_{m+1}$, which is an ordering of the group $S\cup\{i\}$. Apply
\eqref{eq:chain-exactness} twice, once to $S\cup\{i\}$ with this ordering
and once to $S$ with the same ordering less its final element:
\[
\sum_{k=1}^{m}\Delta_{i_k\mid\{i_1,\dots,i_{k-1}\}}
+
\Delta_{i\mid S}
=
v(S\cup\{i\}),
\qquad
\sum_{k=1}^{m}\Delta_{i_k\mid\{i_1,\dots,i_{k-1}\}}
=
v(S).
\]
The two chains share every term except the last, so subtracting leaves
\[
\Delta_{i\mid S}=v(S\cup\{i\})-v(S).
\]
Since $i$ and $S$ were arbitrary, this holds for every contribution.

($\Leftarrow$) Conversely, define
$\Delta_{i\mid S}:=v(S\cup\{i\})-v(S)$ for all admissible pairs and let
$T$ and an ordering $i_1,\dots,i_m$ be given. Writing
$T_k:=\{i_1,\dots,i_k\}$ for the group after $k$ steps, with
$T_0=\varnothing$, each summand is $v(T_k)-v(T_{k-1})$, and the sum
telescopes
\[
\sum_{k=1}^{m}\Delta_{i_k\mid T_{k-1}}
=
\sum_{k=1}^{m}\big(v(T_k)-v(T_{k-1})\big)
=
v(T_m)-v(T_0)
=
v(T),
\]
using $v(\varnothing)=0$. The differences therefore satisfy the demand for
every $v$, and for every ordering, completing the proof.
\end{proof}

\subsection{Proof of Lemma~\ref{lem:baseline}}

\begin{proof}
Throughout, fix the layer $\ell$ of agent $i$ and abbreviate
$H:=H_\ell$, $A^{(i)}:=A_\ell^{(i)}$. The proof has three steps: all
expectations exist; the score has conditional mean zero, which yields the
gradient identity for the raw reward; and the baseline condition
\eqref{eq:baseline-cmi} makes the subtracted term vanish.

\emph{Step 0: integrability.} The reward is bounded and
$\E\norm{\Psi_i}^2<\infty$ by the standing assumption, so $\E\norm{\Psi_iG}$
is finite. For the baseline term, Cauchy--Schwarz gives
\[
\E\norm{\Psi_iB_i}
\le
\left(\E\norm{\Psi_i}^2\right)^{1/2}
\left(\E\left[B_i^2\right]\right)^{1/2}
<\infty,
\]
since $B_i$ is square-integrable. Every expectation below is therefore well-defined. The standing assumption
permits differentiation under the expectations defining the system objective.

\emph{Step 1: the score has conditional mean zero.} Conditionally on $H$,
the action $A^{(i)}$ has density $\pi_{\theta_i}(\cdot\mid H)$, so
\[
\E\left[\Psi_i\mid H\right]
=
\int
\nabla_{\theta_i}\log\pi_{\theta_i}(a\mid H)\,
\pi_{\theta_i}(a\mid H)\,da
=
\int
\nabla_{\theta_i}\pi_{\theta_i}(a\mid H)\,da
=
\nabla_{\theta_i}
\int
\pi_{\theta_i}(a\mid H)\,da
=
0,
\]
where the second equality is the identity
$\pi\nabla\log\pi=\nabla\pi$ and the last holds because the density
integrates to one for every $\theta_i$. The same computation with a
$\theta_i$-independent integrand gives the gradient identity: since
$\theta_i$ enters the joint law only through $\pi_{\theta_i}(A^{(i)}\mid H)$,
by the factorization \eqref{eq:layer-factorization},
\[
\nabla_{\theta_i}J_{\mathrm{sys}}
=
\nabla_{\theta_i}\E\left[G\right]
=
\E\left[\Psi_iG\right].
\]

\emph{Step 2: the baseline vanishes.} It remains to show
$\E[\Psi_iB_i]=0$. Condition on $(H,A^{(i)})$, with respect to which
$\Psi_i$ is measurable, and apply the tower property together with the
hypothesis \eqref{eq:baseline-cmi}:
\[
\E\left[\Psi_iB_i\right]
=
\E\left[\Psi_i\,\E\left[B_i\mid H,A^{(i)}\right]\right]
=
\E\left[\Psi_i\,\E\left[B_i\mid H\right]\right]
=
\E\left[\E\left[B_i\mid H\right]\,\E\left[\Psi_i\mid H\right]\right]
=
0,
\]
where the third equality conditions on $H$ and uses that
$\E[B_i\mid H]$ is a function of $H$ alone, and the last is Step 1.
Subtracting, $\E[\Psi_i(G-B_i)]=\E[\Psi_iG]=\nabla_{\theta_i}J_{\mathrm{sys}}$,
which is \eqref{eq:baseline-gradient}.

\emph{Step 3: indexed baselines.}
Let $B_i=B_i(S)$ satisfy the indexed conditions in the lemma. Then
\begin{align*}
\E\left[
B_i(S)
\mid
H,A^{(i)}
\right]
&=
\E\left[
\E\left[
B_i(S)
\mid
H,A^{(i)},S
\right]
\middle|
H,A^{(i)}
\right]\\
&=
\E\left[
\E\left[
B_i(S)
\mid
H,S
\right]
\middle|
H,A^{(i)}
\right]\\
&=
\E\left[
\E\left[
B_i(S)
\mid
H,S
\right]
\middle|
H
\right]\\
&=
\E\left[
B_i(S)
\mid H
\right].
\end{align*}
The second equality is the indexed baseline condition, and the third uses
$S\perp A^{(i)}\mid H$. Thus the mixture satisfies
\eqref{eq:baseline-cmi}, and Step 2 applies. The same draw of $S$ may serve
every agent for which these two indexed conditions hold.
\end{proof}

\subsection{Proof of Corollary~\ref{cor:retention-pg}}

\begin{proof}
Fix $S\subseteq N_\ell\setminus\{i\}$ and abbreviate
$H:=H_\ell$. The baseline
\[
q^{\mathrm{ret}}_\ell(S)
=
\E\left[
G
\mid
H,A_\ell^{(S)}
\right]
\]
is bounded and depends only on $H$ and $A_\ell^{(S)}$. By
\eqref{eq:layer-factorization},
$A_\ell^{(S)}\perp A_\ell^{(i)}\mid H$, so
$q^{\mathrm{ret}}_\ell(S)$ satisfies \eqref{eq:baseline-cmi}. Lemma~\ref{lem:baseline}
therefore gives
\[
\E\left[
\Psi_i\left(G-q^{\mathrm{ret}}_\ell(S)\right)
\right]
=
\nabla_{\theta_i}J_{\mathrm{sys}}.
\]

By \eqref{eq:evaluation-rule-field} and
\eqref{eq:retention-conditional-mean},
\[
\Delta^{\mathrm{ret}}_{i\mid S}
=
q^{\mathrm{ret}}_\ell(S\cup\{i\})-q^{\mathrm{ret}}_\ell(S).
\]
Since $\Psi_i$ is measurable with respect to
$\left(H,A_\ell^{(S\cup\{i\})}\right)$, the tower property gives
\[
\E\left[
\Psi_iq^{\mathrm{ret}}_\ell(S\cup\{i\})
\right]
=
\E[\Psi_iG].
\]
The admissible-baseline argument gives
$\E[\Psi_iq^{\mathrm{ret}}_\ell(S)]=0$, hence
\[
\E[\Psi_i\Delta^{\mathrm{ret}}_{i\mid S}]
=
\E[\Psi_iG]
=
\nabla_{\theta_i}J_{\mathrm{sys}}.
\]
This proves \eqref{eq:retention-pg}. The same definitions also give
\[
\E\left[
G-q^{\mathrm{ret}}_\ell(S)
\mid
H,A_\ell^{(S\cup\{i\})}
\right]
=
q^{\mathrm{ret}}_\ell(S\cup\{i\})-q^{\mathrm{ret}}_\ell(S)
=
\Delta^{\mathrm{ret}}_{i\mid S}.
\]

For a random $S$, the argument holds conditionally on $S$ and then
averages, using the indexed clause of Lemma~\ref{lem:baseline}.
\end{proof}

\subsection{Proof of Proposition~\ref{prop:variance-monotone}}

\begin{proof}
The reward is bounded, so $G$ is square-integrable and all variances below
are finite. Fix nested backgrounds $S\subseteq T\subseteq N_\ell$ and write
$\mathcal F_S:=\sigma\left(H_\ell,A_\ell^{(S)}\right)$ and
$\mathcal F_T:=\sigma\left(H_\ell,A_\ell^{(T)}\right)$, so that
$\mathcal F_S\subseteq\mathcal F_T$. The law of total variance, applied to $G$ with the two nested conditionings,
gives
\[
\E\left[\Var\left(G\mid\mathcal F_S\right)\right]
=
\E\left[\Var\left(G\mid\mathcal F_T\right)\right]
+
\E\left[\Var\left(\E\left[G\mid\mathcal F_T\right]\mid\mathcal F_S\right)\right].
\]
The second term on the right is nonnegative, being an expected variance, and
the result follows.
\end{proof}

\subsection{Proof of Lemma~\ref{lem:mask-target}}

\begin{proof}
Abbreviate $H:=H_\ell$, $A:=A_\ell^{(N_\ell)}$. The proof has two steps:
the squared loss is minimized by a conditional mean, and the mask identifies that conditional mean as the one we want.

\emph{Step 1: the minimizer is a conditional mean.} The loss
\eqref{eq:mask-loss} is an expected squared error of a predictor of $G$
from the inputs $\left(H,m(S),A^{(S)}\right)$. For any such prediction
problem, the population minimizer over all square-integrable functions
of the inputs is the conditional mean
$\E\left[G\mid H,S,A^{(S)}\right]$, and the risk decomposes as
\[
\mathcal L(\phi)
=
\E\left[\left(G-\E\left[G\mid H,S,A^{(S)}\right]\right)^2\right]
+
\E\left[\left(\E\left[G\mid H,S,A^{(S)}\right]-\hat q_\phi\right)^2\right],
\]
so if the critic class contains this conditional mean, any population
minimizer agrees with it almost surely on the support of the inputs;
this is where
positive mask probability on $S$ is used.

\emph{Step 2: independence removes $S$ from the conditional mean.} It
remains to show $\E[G\mid H,S,A^{(S)}]=\E[G\mid H,A^{(S)}]$. By hypothesis
$S$ is drawn with fresh randomness, conditionally independently of $(A,G)$
given $H$. Fix a background $s$ in the support. Conditionally on $H$, the
event $\{S=s\}$ is independent of $(A,G)$, so conditioning on it changes no
conditional law of $(A^{(s)},G)$. Hence
\[
\E\left[G\mid H,S=s,A^{(s)}\right]
=
\E\left[G\mid H,A^{(s)}\right]
\quad\text{almost surely.}
\]
In other words, the mask tells the critic which outputs it may see, and nothing else.
Combining the steps, any population minimizer equals
$\E[G\mid H,A^{(S)}]$ at every supported background, which is
\eqref{eq:critic-target}. Had the mask been allowed to depend on the
actions or the return, Step 2 would fail and the target would be $\E[G\mid H,S,A^{(S)}]$, a mask-contaminated quantity; the independent mask is not a technicality but the identifying assumption.
\end{proof}

\subsection{Proof of Proposition~\ref{prop:critic}}

\begin{proof}
The gradient identity \eqref{eq:critic-gradient} is an instance of
Lemma~\ref{lem:baseline} with $B_i=\hat q_\phi(S)$; we verify its two
hypotheses.

\emph{Square-integrability} holds by assumption on the critic.

\emph{Conditional mean independence.} Fix first a deterministic
$S\subseteq N_\ell\setminus\{i\}$. A fixed or cross-fitted critic is a
deterministic function of its inputs on the trajectory where it is applied
(cross-fitting ensures the function was built without that trajectory,
so no dependence on the current $A_\ell^{(i)}$ enters through $\phi$).
The inputs $(H_\ell,m(S),A_\ell^{(S)})$ exclude
$A_\ell^{(i)}$, and by the within-layer factorization
\eqref{eq:layer-factorization} the retained outputs $A_\ell^{(S)}$ are
conditionally independent of $A_\ell^{(i)}$ given $H_\ell$. Hence
\[
\E\left[\hat q_\phi(S)\mid H_\ell,A_\ell^{(i)}\right]
=
\E\left[\hat q_\phi(S)\mid H_\ell\right],
\]
which is \eqref{eq:baseline-cmi}. Lemma~\ref{lem:baseline} then gives
$\E\left[\Psi_i\left(G-\hat q_\phi(S)\right)\right]
=\nabla_{\theta_i}J_{\mathrm{sys}}$, which is \eqref{eq:critic-gradient}.
For $S$ drawn conditionally independently of $A_\ell^{(i)}$ given
$H_\ell$, the indexed clause of the lemma applies, and one draw serves
every excluded agent. Nowhere was accuracy of the critic used: the
argument is identical for a perfect critic and an untrained one.

For \eqref{eq:critic-bias}, condition on
$(H_\ell,A_\ell^{(S\cup\{i\})})$. The critic term
$\hat q_\phi(S)$ is a function of $(H_\ell,A_\ell^{(S)})$,
which the conditioning contains, so it passes through:
\[
\E\left[G-\hat q_\phi(S)\,\Big|\,H_\ell,A_\ell^{(S\cup\{i\})}\right]
=
\E\left[G\mid H_\ell,A_\ell^{(S\cup\{i\})}\right]-\hat q_\phi(S).
\]
Subtract the marginal contribution
\[
\Delta_{i\mid S}
=
\E\left[G\mid H_\ell,A_\ell^{(S\cup\{i\})}\right]
-
\E\left[G\mid H_\ell,A_\ell^{(S)}\right].
\]
The first conditional expectation cancels, leaving
\[
\E\left[G\mid H_\ell,A_\ell^{(S)}\right]-\hat q_\phi(S),
\]
which is \eqref{eq:critic-bias}: the conditional attribution error is the
critic's error at the baseline background.
\end{proof}

\subsection{Proof of Corollary~\ref{cor:routing-removal-gradient}}

\begin{proof}
Both claims are instances of Lemma~\ref{lem:baseline}; we verify its
hypothesis for each baseline.

\emph{The removed-router draw.} By construction,
$G^{-i}_{\mathrm{route}}$ is generated from the history, the remaining
realized candidates, and fresh router randomness: it is a function of
$\left(H,A^{(-i)},\xi'\right)$ with $\xi'$ drawn independently of
everything else. Under independent candidate generation (the
within-layer factorization \eqref{eq:layer-factorization} applied to the
routing layer), $A^{(-i)}$ is conditionally independent of $A^{(i)}$
given $H$, and $\xi'$ carries no information about $A^{(i)}$ at all. Hence
\[
\E\left[G^{-i}_{\mathrm{route}}\mid H,A^{(i)}\right]
=
\E\left[G^{-i}_{\mathrm{route}}\mid H\right]:
\]
what the reduced slate would deliver does not depend on the output of the
candidate no longer in it. The draw is bounded, hence square-integrable,
and Lemma~\ref{lem:baseline} gives
\eqref{eq:routing-removal-gradient}.

\emph{A predictor.} A fixed or cross-fitted predictor built from
$\left(H,A^{(-i)}\right)$ is a deterministic function of those inputs on
the trajectory where it is applied, cross-fitting ensuring that its
parameters were trained without that trajectory. The same conditional
independence gives the same conditional-mean identity, and the lemma
applies again. Accuracy of the predictor is used nowhere.
\end{proof}

\subsection{Proof of Corollary~\ref{cor:routing-dr}}

\begin{proof}
Condition on $\left(H,A^{(1:K)}\right)$. The router weights
$p_j$ and $p_j^{-i}$ are then fixed. By \eqref{eq:candidate-outcome},
\[
\E\left[
\1_{\{I=j\}}
\left(G-\widehat\mu_j\right)
\mid \left(H,A^{(1:K)}\right)
\right]
=
p_j\left(g_j-\widehat\mu_j\right).
\]
Therefore,
\[
\E\left[
\widehat g_j
\mid \left(H,A^{(1:K)}\right)
\right]
=
\widehat\mu_j
+
\frac{1}{p_j}
p_j\left(g_j-\widehat\mu_j\right)
=
g_j.
\]
Overlap makes the ratio well-defined. Taking the difference,
\[
\E\left[
\widehat\Delta_i^{\mathrm{rem}}
\mid \left(H,A^{(1:K)}\right)
\right]
=
\sum_jp_jg_j
-
\sum_{j\ne i}p_j^{-i}g_j
=
\Delta_i^{\mathrm{rem}}.
\]
The outcome model may therefore be misspecified without changing the
conditional mean of the corrected contrast.
\end{proof}

\subsection{Proof of Theorems~\ref{thm:selective} and~\ref{thm:selective-lb}} \label{subseq:allocations}

We begin by stating Theorem 3.

\begin{theorem}[Matching lower bound]
\label{thm:selective-lb}
In the setting of Theorem~\ref{thm:selective}, write $\mathcal A$ for
the marginal contributions with $p^\star_{i\mid S}=1$, $\mathcal U$ for
the rest, and
$C_{\mathcal U}=C_{\max}-\sum_{(i,S)\in\mathcal A}c_{i\mid S}$. Every
allocation of the same expected cost satisfies
\begin{equation}
\label{eq:query-lower-bound}
\mathcal R(p)
\ge
\sum_{(i,S)\in\mathcal A}
a_i^2\sigma_{i\mid S}^2
+
\frac{
\left(
\sum_{(i,S)\in\mathcal U}
a_i
\sqrt{\sigma_{i\mid S}^2+\eta_{i\mid S}^2}
\sqrt{c_{i\mid S}}
\right)^2
}{
C_{\mathcal U}
}
-
\sum_{(i,S)\in\mathcal U}
a_i^2\eta_{i\mid S}^2,
\end{equation}
with equality at $p^\star$.
\end{theorem}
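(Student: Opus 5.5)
The plan is to reduce to the per-contribution risk from Theorem~\ref{thm:selective} and then apply Cauchy--Schwarz on the unsaturated block. By \eqref{eq:selective-mse}, writing $w_{i\mid S}:=a_i^2(\sigma_{i\mid S}^2+\eta_{i\mid S}^2)$, every allocation has
\[
\mathcal R(p)=\sum_{(i,S)} \frac{w_{i\mid S}}{p_{i\mid S}}-\sum_{(i,S)}a_i^2\eta_{i\mid S}^2 .
\]
Contributions with $w_{i\mid S}=0$ contribute nothing, so I restrict to $w_{i\mid S}>0$ as in the theorem. I split the sum over $\mathcal A\cup\mathcal U$. The first step is to lower-bound the $\mathcal A$ block. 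Since $p_{i\mid S}\le 1$, each term satisfies $w_{i\mid S}/p_{i\mid S}-a_i^2\eta_{i\mid S}^2\ge a_i^2\sigma_{i\mid S}^2$, with equality exactly when $p_{i\mid S}=1$. This produces the first sum in \eqref{eq:query-lower-bound}.

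For the $\mathcal U$ block, I use the budget. If $p$ has the same expected cost as $p^\star$, then $\sum c_{i\mid S}p_{i\mid S}=C_{\max}$. Hence $\sum_{\mathcal U}c_{i\mid S}p_{i\mid S}= C_{\max}-\sum_{\mathcal A}c_{i\mid S}p_{i\mid S}$. The difficulty is that this quantity is not automatically $\le C_{\mathcal U}$, because $p$ may put $p_{i\mid S}<1$ on $\mathcal A$ and free up budget. This is the main obstacle: the stated bound must hold for allocations that do not saturate $\mathcal A$. My first attempt is to avoid splitting $p$'s budget and instead argue from the optimality of $p^\star$. By Theorem~\ref{thm:selective}, $p^\star$ is the unique minimizer of the convex problem over the budget set, so $\mathcal R(p)\ge\mathcal R(p^\star)$ for every feasible $p$, including every $p$ with the same expected cost. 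It therefore suffices to show that the right-hand side of \eqref{eq:query-lower-bound} equals $\mathcal R(p^\star)$.

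The Cauchy--Schwarz step serves as the direct verification, and also as a self-contained proof for allocations saturating $\mathcal A$. For any $p$ with $\sum_{\mathcal U}c_{i\mid S}p_{i\mid S}\le C_{\mathcal U}$,
\[
\Bigl(\sum_{\mathcal U}\sqrt{w_{i\mid S}c_{i\mid S}}\Bigr)^2\le \Bigl(\sum_{\mathcal U}\frac{w_{i\mid S}}{p_{i\mid S}}\Bigr)\Bigl(\sum_{\mathcal U}c_{i\mid S}p_{i\mid S}\Bigr)\le C_{\mathcal U}\sum_{\mathcal U}\frac{w_{i\mid S}}{p_{i\mid S}},
\]
which gives the middle term of \eqref{eq:query-lower-bound}. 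Equality requires $p_{i\mid S}\propto\sqrt{w_{i\mid S}/c_{i\mid S}}$ on $\mathcal U$ and a tight budget. This is exactly \eqref{eq:optimal-query} with $\lambda$ determined by $\lambda\sum_{\mathcal U}\sqrt{w_{i\mid S}c_{i\mid S}}=C_{\mathcal U}$.

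Finally, I evaluate $\mathcal R(p^\star)$. Substituting $p^\star=1$ on $\mathcal A$ and $p^\star_{i\mid S}=\lambda\sqrt{w_{i\mid S}/c_{i\mid S}}$ on $\mathcal U$ gives $\sum_{\mathcal U}w_{i\mid S}/p^\star_{i\mid S}=\lambda^{-1}\sum_{\mathcal U}\sqrt{w_{i\mid S}c_{i\mid S}}=(\sum_{\mathcal U}\sqrt{w_{i\mid S}c_{i\mid S}})^2/C_{\mathcal U}$. Here the budget is binding in the nontrivial case $C_{\max}<\sum c_{i\mid S}$, which I justify from the KKT conditions implicit in Theorem~\ref{thm:selective}, since $\mathcal R$ is strictly decreasing in each $p_{i\mid S}$. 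So the bound is attained at $p^\star$, and combined with global optimality it holds for every allocation of the same expected cost. In the degenerate case $\mathcal U=\varnothing$, the middle term is read as zero and the bound follows from the $\mathcal A$ block alone.
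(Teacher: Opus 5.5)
Your proof is correct and follows the paper's own route: you evaluate $\mathcal R(p^\star)$ by setting $p^\star_{i\mid S}=1$ on $\mathcal A$ and using the square-root rule on $\mathcal U$ with $\lambda$ fixed by the binding budget, then obtain $\mathcal R(p)\ge\mathcal R(p^\star)$ from the global optimality of $p^\star$ established in Theorem~\ref{thm:selective}. Your Cauchy--Schwarz step is a valid self-contained check for allocations that saturate $\mathcal A$, but, as you note, the general case rests on the optimality argument, exactly as in the paper.
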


The following proof addresses both theorems.

\begin{proof}
Condition throughout on the data and learned estimates, so
$\Delta_{i\mid S}$, $\eta_{i\mid S}$, $a_i$, and $c_{i\mid S}$ are fixed.

First, since $Z_{i\mid S}$ is independent of $Y_{i\mid S}$ and
$\E[Z_{i\mid S}]=p_{i\mid S}$,
\[
\E\left[\widetilde\Delta_{i\mid S}\right]
=
\widehat\Delta_{i\mid S}
+
\frac{\E[Z_{i\mid S}]}{p_{i\mid S}}
\E\left[
Y_{i\mid S}-\widehat\Delta_{i\mid S}
\right]
=
\Delta_{i\mid S}.
\]
Thus the correction is unbiased.

Next,
\[
\widetilde\Delta_{i\mid S}-\Delta_{i\mid S}
=
\eta_{i\mid S}
+
\frac{Z_{i\mid S}}{p_{i\mid S}}
\left(
Y_{i\mid S}-\widehat\Delta_{i\mid S}
\right).
\]
Since
\[
\E\left[
Y_{i\mid S}-\widehat\Delta_{i\mid S}
\right]
=
-\eta_{i\mid S},
\qquad
\E\left[
\left(
Y_{i\mid S}-\widehat\Delta_{i\mid S}
\right)^2
\right]
=
\sigma_{i\mid S}^2+\eta_{i\mid S}^2,
\]
and $Z_{i\mid S}^2=Z_{i\mid S}$, we obtain
\begin{align*}
\E\left[
\left(
\widetilde\Delta_{i\mid S}-\Delta_{i\mid S}
\right)^2
\right]
={}&
\eta_{i\mid S}^2
+
2\eta_{i\mid S}
\E\left[
\frac{Z_{i\mid S}}{p_{i\mid S}}
\left(
Y_{i\mid S}-\widehat\Delta_{i\mid S}
\right)
\right] \\
&+
\E\left[
\frac{Z_{i\mid S}}{p_{i\mid S}^2}
\left(
Y_{i\mid S}-\widehat\Delta_{i\mid S}
\right)^2
\right] \\
={}&
\eta_{i\mid S}^2
-
2\eta_{i\mid S}^2
+
\frac{\sigma_{i\mid S}^2+\eta_{i\mid S}^2}{p_{i\mid S}} \\
={}&
\frac{\sigma_{i\mid S}^2+\eta_{i\mid S}^2}{p_{i\mid S}}
-
\eta_{i\mid S}^2.
\end{align*}

It remains to optimize. Write $\mathcal M_+$ for the set of marginal contributions with $a_i^2(\sigma_{i\mid S}^2+\eta_{i\mid S}^2)>0$; inactive pairs contribute zero risk for every $p$ and may be skipped. If the budget
permits evaluating every active marginal contribution, then
$p_{i\mid S}^\star=1$ for all $(i,S)\in\mathcal M_+$. Otherwise the budget
binds, and terms independent of $p$ may be dropped, leaving
\[
\min_p
\sum_{(i,S)\in\mathcal M_+}
\frac{
a_i^2\left(\sigma_{i\mid S}^2+\eta_{i\mid S}^2\right)
}{
p_{i\mid S}
}
\]
subject to
\[
\sum_{(i,S)\in\mathcal M_+}
c_{i\mid S}p_{i\mid S}
=C_{\max},
\qquad
0<p_{i\mid S}\le1.
\]

The objective is strictly convex on $\mathcal M_+$. Let $\nu>0$ be the
multiplier for the budget constraint and $\mu_{i\mid S}\ge0$ the multiplier
for $p_{i\mid S}\le1$. Stationarity gives
\[
-
\frac{
a_i^2\left(\sigma_{i\mid S}^2+\eta_{i\mid S}^2\right)
}{
p_{i\mid S}^2
}
+
\nu c_{i\mid S}
+
\mu_{i\mid S}
=
0.
\]
When $p_{i\mid S}<1$, complementary slackness gives
$\mu_{i\mid S}=0$, so
\[
p_{i\mid S}
=
\frac{1}{\sqrt{\nu}}\,
\frac{
a_i\sqrt{\sigma_{i\mid S}^2+\eta_{i\mid S}^2}
}{
\sqrt{c_{i\mid S}}
}.
\]
When this value exceeds one, the constraint binds. Setting
$\lambda:=1/\sqrt{\nu}$ gives \eqref{eq:optimal-query}, with $\lambda$
chosen so that the budget binds. Strict convexity gives uniqueness.

It remains to evaluate the risk at the optimum. Let $\mathcal A$ contain
the marginal contributions with $p_{i\mid S}^\star=1$, and let
$\mathcal U=\mathcal M_+\setminus\mathcal A$. On $\mathcal A$,
\[
a_i^2
\left(
\frac{\sigma_{i\mid S}^2+\eta_{i\mid S}^2}{p_{i\mid S}^\star}
-
\eta_{i\mid S}^2
\right)
=
a_i^2\sigma_{i\mid S}^2.
\]
On $\mathcal U$,
\[
p_{i\mid S}^\star
=
\lambda\,
\frac{
a_i\sqrt{\sigma_{i\mid S}^2+\eta_{i\mid S}^2}
}{
\sqrt{c_{i\mid S}}
}.
\]
Since the remaining budget is
\[
C_{\mathcal U}
=
C_{\max}-\sum_{(i,S)\in\mathcal A}c_{i\mid S},
\]
the binding budget constraint gives
\[
\lambda
=
\frac{
C_{\mathcal U}
}{
\sum_{(i,S)\in\mathcal U}
a_i
\sqrt{\sigma_{i\mid S}^2+\eta_{i\mid S}^2}
\sqrt{c_{i\mid S}}
}.
\]
Substituting into the risk yields
\[
\mathcal R(p^\star)
=
\sum_{(i,S)\in\mathcal A}
a_i^2\sigma_{i\mid S}^2
+
\frac{
\left(
\sum_{(i,S)\in\mathcal U}
a_i
\sqrt{\sigma_{i\mid S}^2+\eta_{i\mid S}^2}
\sqrt{c_{i\mid S}}
\right)^2
}{
C_{\mathcal U}
}
-
\sum_{(i,S)\in\mathcal U}
a_i^2\eta_{i\mid S}^2.
\]
Because $p^\star$ is the unique global minimizer,
$\mathcal R(p)\ge\mathcal R(p^\star)$ for every feasible allocation. This is
\eqref{eq:query-lower-bound}, with equality at $p^\star$.
\end{proof}

\section{Refinements of Theorem~\ref{thm:wta}}
\label{app:routing-refinements}

The results below refine Theorem~\ref{thm:wta}; each is referenced at
the end of Section~\ref{sec:definitions}.

\begin{proposition}[Selection makes private utility risk-sensitive]
\label{prop:risk-sensitive}
Under softmax routing \eqref{eq:softmax-router}, private utility is not
determined by the expected candidate reward. Thus, there exist
candidate-reward distributions with equal conditional means that induce
different private utilities.
\end{proposition}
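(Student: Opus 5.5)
This is an existence statement, so I will prove it with an explicit two-candidate example. Take $K=2$ agents and a fixed prompt $H=h$. The score is $s(h,a)$ and the router is the softmax rule \eqref{eq:softmax-router}. Agent $2$ is held fixed with deterministic candidate $a_0$, reward $R^{(2)}=r_0$ and score $s_0$. Agent $1$'s candidate reward is compared under two laws with the same conditional mean $\mu$:
\begin{itemize}
\item law (a): agent $1$ always outputs a candidate with reward $\mu$ and score $s_\mu$;
\item law (b): agent $1$ outputs, with probability $1/2$ each, candidates with rewards $\mu\pm\delta$ and scores $s_\pm$.
\end{itemize}
To make the point cleanly, I take the score to be a strictly increasing function of the candidate's reward, e.g.\ $s(h,a)=r(h,a)$. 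This is the "router rates candidates" regime discussed after Theorem~\ref{thm:wta}.

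Next I compute agent $1$'s private utility under winner-take-all. Conditioning on the candidates and using $G=R^{(I)}$ gives
\[
U_1=\E[p_1R^{(1)}],\qquad p_1=(1-\epsilon)\,\sigma\big((s^{(1)}-s_0)/\tau\big)+\epsilon/2,
\]
where $\sigma$ is the logistic function. Under law (a), $U_1=f(\mu)$ with $f(x):=x\,p_1(x)$ and $p_1(x)=(1-\epsilon)\sigma((x-s_0)/\tau)+\epsilon/2$. Under law (b), $U_1=\tfrac12 f(\mu+\delta)+\tfrac12 f(\mu-\delta)$. The expected candidate reward is $\mu$ in both cases. The two utilities therefore differ whenever $f$ is not affine on $[\mu-\delta,\mu+\delta]$.

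The key step is to check that $f$ is not affine. For $\epsilon<1$, $f''(x)=2p_1'(x)+x\,p_1''(x)$. At the logistic inflection point $x=s_0$ we have $p_1''=0$ and $p_1'=(1-\epsilon)/(4\tau)>0$, so $f''(s_0)>0$. Choosing $\mu=s_0$ and $\delta$ small, a second-order Taylor expansion (or strict convexity near $s_0$) gives $\tfrac12 f(\mu+\delta)+\tfrac12 f(\mu-\delta)>f(\mu)$. So the mean-preserving spread strictly raises private utility: selection rewards variance, which is the risk-sensitivity claimed.

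This also fits rewards in $\{0,1\}$. For example, set $r_0=0$, let law (b) be Bernoulli$(1/2)$ with scores $s(1)>s(0)$, and let law (a) have reward $1/2$ with some intermediate score. Then a direct comparison of $\tfrac12 p_1(s(1))$ with $\tfrac12 p_1(s_{1/2})$ finishes the argument, since $s(1)>s_{1/2}$ and $p_1$ is strictly increasing.

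The main obstacle is making sure the example respects the paper's conventions. Rewards must be bounded, and this holds. The score must be a function of $(H,A)$ only, and it is. The difference must not be an artifact of a degenerate router: with $\epsilon=1$ the router is uniform, $p_1$ is constant, $f$ is linear and no separation occurs. So I will state the example for $\epsilon<1$ and a candidate-reading score. I would remark that $\epsilon=1$ is exactly the candidate-blind case of Theorem~\ref{thm:wta}, in which risk sensitivity disappears. This is consistent with the proposition, since its claim is existential.
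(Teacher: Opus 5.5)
Your proof is correct. Your closing Bernoulli comparison is exactly the paper's proof. The paper takes $K=2$, $\epsilon=0$, $\tau=1$, $s(H,A^{(j)})=R^{(j)}$ and a deterministic competitor reward $c$, and compares $R^{(1)}=\tfrac12$ with $R^{(1)}\sim\operatorname{Bernoulli}(\tfrac12)$. Because the zero-reward branch contributes nothing, $U_1^{\mathrm{risky}}=\tfrac12\operatorname{sigmoid}(1-c)>\tfrac12\operatorname{sigmoid}(\tfrac12-c)=U_1^{\mathrm{safe}}$ follows from monotonicity alone, for every $c$.

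Your main argument is a genuinely different route. You show that $f(x)=x\,p_1(x)$ is strictly convex near the competitor's score: there $p_1''$ vanishes at the logistic inflection point while $p_1'=(1-\epsilon)/(4\tau)>0$. Hence small mean-preserving spreads centered there strictly raise private utility. This explains where the risk-seeking comes from (curvature of $x\,p_1(x)$) for every $\epsilon<1$ and $\tau>0$. It is, however, local: it needs $\mu$ near $s_0$ and $\delta$ small, and the sign is not global. The function $f$ turns concave well above the competitor's score, where spreads lower utility. For $\epsilon=0$, $\tau=1$, $s_0=0$ one has $f''(x)=\operatorname{sigmoid}'(x)\bigl(2-x(2\operatorname{sigmoid}(x)-1)\bigr)$, which is negative for $x\ge3$. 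The paper's example instead uses $f(0)=0$ together with monotonicity of $f(x)/x=p_1(x)$. That gives a strict inequality for every competitor reward $c$, with no calculus.

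Two sentences in your write-up overreach, though neither is load-bearing.
\begin{enumerate}
\item Non-affinity of $f$ on $[\mu-\delta,\mu+\delta]$ does not by itself force $\tfrac12 f(\mu+\delta)+\tfrac12 f(\mu-\delta)\ne f(\mu)$; take $f(x)=(x-\mu)^3$. It is your local strict convexity that does the work.
\item The Bernoulli example does not use $\{0,1\}$-valued rewards, since the safe law takes the value $\tfrac12$. In fact, with binary rewards and a reward-based score $s=\kappa(r)$, $U_1$ equals $\Prob(R^{(1)}=1)$ times the expected selection probability of a reward-one candidate. It is therefore determined by the mean, so a binary-reward example would need a score that reads more of the candidate than its reward.
\end{enumerate}
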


\begin{proof}
Condition on a fixed history. Take $K=2$, $\epsilon=0$, $\tau=1$, and
$s(H,A^{(j)})=R^{(j)}$, and let $R^{(2)}=c$ be deterministic.
Compare $R^{(1)}_{\text{safe}}=\tfrac12$ almost surely with
$R^{(1)}_{\text{risky}}\sim\operatorname{Bernoulli}(\tfrac12)$. Agent $1$ is selected
with probability
\[
p_1(r)
=
\frac{e^r}{e^r+e^c}
=
\operatorname{sigmoid}(r-c),
\]
where $\operatorname{sigmoid}(u)=(1+e^{-u})^{-1}$. Under the constant policy,
\[
U_1^{\mathrm{safe}}
=
p_1\!\left(\frac12\right)\frac12
=
\frac12\,\operatorname{sigmoid}\!\left(\frac12-c\right).
\]
Under the Bernoulli policy, the zero-reward branch contributes nothing, so
\[
U_1^{\mathrm{risky}}
=
\frac12\,p_1(1)
=
\frac12\,\operatorname{sigmoid}(1-c).
\]
The policies have the same mean reward, but
$U_1^{\mathrm{risky}}>U_1^{\mathrm{safe}}$ because the sigmoid function is strictly
increasing. Private utility is therefore not determined by expected
candidate reward.
\end{proof}

\begin{remark}[Converse under score completeness]
\label{rem:completeness}
The reverse implication in Theorem~\ref{thm:wta}, from $b_i=0$ to
conditional mean independence, holds when the score is complete: when
the closed linear span in $L^2$ of the coordinates of $\Psi_i$ contains
every square-integrable function of $(H,A^{(i)})$ with zero conditional
mean given $H$. Completeness holds when each history--action probability has its own free parameter, as in a finite tabular softmax, but can fail when parameters are shared across histories, as in an autoregressive model.
\end{remark}

\begin{proposition}[Directional inflation under monotone routing]
\label{prop:wta-sign}
Assume the router \eqref{eq:softmax-router} with score
$s(H,A^{(j)})=\kappa(R^{(j)})$ for an increasing $\kappa$, and
conditionally independent nonnegative candidate rewards given $H$. Fix
a direction $w_i$ and let
\[
\zeta_{w_i}(h,r)
:=
\E\left[
w_i^\top\Psi_i
\mid
H=h,\ R^{(i)}=r
\right].
\]
If $r \mapsto \zeta_{w_i}(h,r)$ is nondecreasing for almost every $h$,
then
\begin{equation}
\label{eq:wta-directional-sign}
w_i^\top b_i\ge0,
\qquad
\E\left[
\1_{\{I=i\}}\,w_i^\top\Psi_iG
\right]
\ge
w_i^\top\nabla_{\theta_i}J_{\mathrm{sys}}.
\end{equation}
The inequality is strict when no factor degenerates: the router depends on scores ($\epsilon<1$), $R^{(i)}$ varies on a set of histories of positive probability, $\kappa$ and $\zeta_{w_i}(h,\cdot)$ are strictly increasing on its conditional support, and on those histories some competitor has positive reward with positive conditional probability.
\end{proposition}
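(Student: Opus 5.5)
By Theorem~\ref{thm:wta}, $b_i=-\E[\Psi_iD_i]$, so $w_i^\top b_i=-\E[w_i^\top\Psi_i\,D_i]$. The plan is to show that this expectation is a conditional covariance, given $H$, between a nondecreasing and a nonincreasing function of the single scalar $R^{(i)}$. Chebyshev's association inequality then gives the sign. The second inequality in \eqref{eq:wta-directional-sign} is just a rewriting of the first, since $\E[\1_{\{I=i\}}\Psi_iG]=\nabla_{\theta_i}J_{\mathrm{sys}}+b_i$.

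\emph{Step 1: reduce $D_i$ to a function of $R^{(i)}$.} Since $s(H,A^{(j)})=\kappa(R^{(j)})$, each router probability $p_j$ in \eqref{eq:softmax-router} is a function of the reward vector $(R^{(1)},\dots,R^{(K)})$. Hence $D_i=\sum_{j\ne i}p_jR^{(j)}$ depends on $A^{(i)}$ only through $R^{(i)}$. The competitors' rewards are conditionally independent of $R^{(i)}$ given $H$; I also need them conditionally independent of $A^{(i)}$, which holds because \eqref{eq:layer-factorization} makes $A^{(-i)}$ independent of $A^{(i)}$ given $H$. Therefore
\[
d_i(H,A^{(i)})=\delta(H,R^{(i)}),\qquad \delta(h,r):=\E\Bigl[\textstyle\sum_{j\ne i}p_jR^{(j)}\Bigm| H=h,\ R^{(i)}=r\Bigr],
\]
where the expectation integrates only over the competitors' rewards.

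\emph{Step 2: compute the covariance.} Using the tower property exactly as in the proof of Theorem~\ref{thm:wta}, $\E[w_i^\top\Psi_iD_i]=\E[w_i^\top\Psi_i\,\delta(H,R^{(i)})]$. Conditioning further on $(H,R^{(i)})$ turns this into $\E[\zeta_{w_i}(H,R^{(i)})\,\delta(H,R^{(i)})]$. The score has conditional mean zero (Lemma~\ref{lem:baseline}, Step 1), so $\E[\zeta_{w_i}(H,R^{(i)})\mid H]=0$. Hence the inner conditional expectation is $\operatorname{Cov}(\zeta_{w_i}(H,R^{(i)}),\delta(H,R^{(i)})\mid H)$.

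\emph{Step 3: monotonicity of $\delta$.} Fix $H$ and the competitors' rewards. For $j\ne i$,
\[
p_j=(1-\epsilon)\frac{e^{\kappa(R^{(j)})/\tau}}{\sum_k e^{\kappa(R^{(k)})/\tau}}+\frac{\epsilon}{K}.
\]
The term $e^{\kappa(R^{(i)})/\tau}$ sits only in the denominator and is nondecreasing in $R^{(i)}$, so $p_j$ is nonincreasing in $R^{(i)}$. Because $R^{(j)}\ge0$, each summand $p_jR^{(j)}$ is nonincreasing in $R^{(i)}$. Integrating over the competitors preserves this, so $r\mapsto\delta(h,r)$ is nonincreasing.

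\emph{Step 4: conclude.} By hypothesis $\zeta_{w_i}(h,\cdot)$ is nondecreasing. Chebyshev's inequality (for $X,X'$ i.i.d.\ copies of $R^{(i)}$ given $H$, one has $(\zeta(X)-\zeta(X'))(\delta(X)-\delta(X'))\le0$) makes the conditional covariance nonpositive. Averaging over $H$ gives $w_i^\top b_i\ge0$.

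\emph{Strict case (main obstacle).} Here I must show $\delta(h,\cdot)$ is strictly decreasing on the support of $R^{(i)}$ for a positive-probability set of $h$. The four non-degeneracy conditions are exactly what this needs:
\begin{itemize}
\item $\epsilon<1$, so the softmax term actually enters $p_j$;
\item $\kappa$ strictly increasing, so the denominator strictly increases with $R^{(i)}$ and each $p_j$ strictly decreases;
\item some competitor has $R^{(j)}>0$ with positive conditional probability, so at least one summand $p_jR^{(j)}$ strictly decreases;
\item $R^{(i)}$ varies on these histories, so the i.i.d.\ pair $X,X'$ differs with positive probability.
\end{itemize}
Combined with $\zeta_{w_i}(h,\cdot)$ strictly increasing, the integrand $(\zeta(X)-\zeta(X'))(\delta(X)-\delta(X'))$ is strictly negative on the event $X\ne X'$. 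That event has positive probability, so the covariance is strictly negative on a positive-measure set of histories, and $w_i^\top b_i>0$.

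The delicate measure-theoretic point is arranging that these events have positive joint probability.
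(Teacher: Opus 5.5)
Your proposal is correct and follows essentially the same route as the paper: write $w_i^\top b_i=-\E[w_i^\top\Psi_i D_i]$ and reduce it to $-\E[\zeta_{w_i}(H,R^{(i)})\,m_H(R^{(i)})]$, where $m_h(r)=\E[D_i\mid H=h,R^{(i)}=r]$ is nonincreasing; then use $\E[\zeta_{w_i}(H,R^{(i)})\mid H]=0$ to read this as a conditional covariance, sign it with the independent-copy (Chebyshev) identity, and get strictness from the same non-degeneracy conditions. The only cosmetic difference is how you reach that product: you pass through $d_i(H,A^{(i)})=\delta(H,R^{(i)})$, using that $R^{(i)}=r(H,A^{(i)})$ is determined by the action, whereas the paper asserts $w_i^\top\Psi_i\perp D_i\mid(H,R^{(i)})$ directly.
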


\begin{proof}
Write $Z_i:=w_i^\top\Psi_i$. By \eqref{eq:displacement-bias-identity} and
$\E[\Psi_i\,\bar d_i(H)]=0$, as in the proof of Theorem~\ref{thm:wta},
\[
w_i^\top b_i
=
-\E[Z_iD_i].
\]
For fixed $h$, define
\[
m_h(r)
:=
\E\left[
D_i
\mid
H=h,\ R^{(i)}=r
\right].
\]
Conditional independence of candidate rewards gives
\[
m_h(r)
=
\E\left[
(1-\epsilon)
\frac{
\sum_{j\ne i}
e^{\kappa(R^{(j)})/\tau}R^{(j)}
}{
e^{\kappa(r)/\tau}
+
\sum_{k\ne i}e^{\kappa(R^{(k)})/\tau}
}
+
\frac{\epsilon}{K}
\sum_{j\ne i}R^{(j)}
\;\middle|\;
H=h
\right].
\]
Since rewards are nonnegative and $\kappa$ is increasing,
$m_h(\cdot)$ is nonincreasing.

By the layerwise factorization, conditioning further on $R^{(i)}$ preserves the independence
of $A^{(-i)}$ from $A^{(i)}$. Under the softmax router \eqref{eq:softmax-router} with score $\kappa(R^{(j)})$, $D_i$ depends on agent $i$ only through
$R^{(i)}$, so
\[
Z_i
\perp
D_i
\mid
\left(H,R^{(i)}\right).
\]
Therefore,
\[
\E\left[
Z_iD_i
\mid
H,R^{(i)}
\right]
=
\zeta_{w_i}\!\left(H,R^{(i)}\right)
m_H\!\left(R^{(i)}\right).
\]
The score identity also gives
\[
\E\left[
\zeta_{w_i}\left(H,R^{(i)}\right)
\mid H
\right]
=
\E[Z_i\mid H]
=
0.
\]

Let $\widetilde R^{(i)}$ be an independent copy of $R^{(i)}$
conditional on $H$. The covariance identity yields
\begin{align*}
2\,\E\left[
\zeta_{w_i}\left(H,R^{(i)}\right)
m_H\left(R^{(i)}\right)
\mid H
\right]
={}&
\E\Big[
\big(
\zeta_{w_i}(H,R^{(i)})
-
\zeta_{w_i}(H,\widetilde R^{(i)})
\big)\\
&\qquad\cdot
\big(
m_H(R^{(i)})
-
m_H(\widetilde R^{(i)})
\big)
\mid H
\Big]
\le 0,
\end{align*}
because the first factor is nondecreasing in the reward and the second is
nonincreasing. Hence
\[
\E[Z_iD_i]\le0,
\qquad
w_i^\top b_i\ge0.
\]

Under the stated strictness conditions, $m_h$ is strictly decreasing on a
nondegenerate conditional reward support for a set of histories of positive
probability. The covariance above is then strictly negative, so
$w_i^\top b_i>0$. Finally,
\[
\E\left[
\1_{\{I=i\}}Z_iG
\right]
=
w_i^\top\nabla_{\theta_i}J_{\mathrm{sys}}
+
w_i^\top b_i,
\]
which proves the second inequality and its strict form.
\end{proof}

\section{Doubly Robust Routing Details}
\label{app:dr-details}

The routing observation is $O=\left(H,A^{(1:K)},I,G,p_{1:K}\right)$, with
$G=R^{(I)}$ the selected candidate's reward. Under the $\epsilon$-mixed
softmax with weights $w_j=e^{s(H,A^{(j)})/\tau}$, the
factual and removed probabilities are
\[
p_j
=
(1-\epsilon)\frac{w_j}{\sum_{k=1}^{K}w_k}+\frac{\epsilon}{K},
\qquad
p_j^{-i}
=
(1-\epsilon)\frac{w_j}{\sum_{k\ne i}w_k}+\frac{\epsilon}{K-1}
\quad (j\ne i),
\]
so removal renormalizes both the softmax mass and the exploration mass
over the surviving agents. Removal changes only the probabilities being evaluated, $p_j$ versus
$p_j^{-i}$ in the two sums; the inverse-propensity correction always
divides by the factual $p_j$, because that is the router that
generated the data.

With
$w_{ij}=p_j-\1_{\{j\ne i\}}\,p_j^{-i}$,
the estimator admits the centered representation
\begin{align*}
\widehat\Delta_i^{\mathrm{rem}}-\Delta_i^{\mathrm{rem}}
={}&
\sum_{j=1}^{K}w_{ij}
\frac{\1_{\{I=j\}}}{p_j}
\left(G-g_j\right)\\
&+
\sum_{j=1}^{K}w_{ij}
\left(
1-\frac{\1_{\{I=j\}}}{p_j}
\right)
\left(\widehat\mu_j-g_j\right).
\end{align*}
Conditional on the data, both sums have mean zero: the first is
selected-outcome noise, and the second is the augmentation term correcting
outcome-model error. This is the doubly robust form specialized to known
factual propensities. Overlap governs both identification and variance,
since the correction scales as $1/p_j$.

For actor training without queries, the direct baseline
$G-\sum_{j\ne i}p_j^{-i}\widehat\mu_j$ is gradient-correct whenever the
cross-fitted predictor excludes $A^{(i)}$
(Corollary~\ref{cor:routing-removal-gradient}), but it is not in general
an unbiased estimate of the removal attribution; the augmented
inverse-propensity contrast restores attribution under the known factual
propensities for any fixed or cross-fitted outcome model and is the
quantity selectively corrected in Section~\ref{sec:selective}.

\section{Implementation Details}
\label{app:implementation}

Agent roles are direct arithmetic, equation-first reasoning, and
final-answer-only generation, with sampling temperatures
$(0.35,0.50,0.72)$ and top-$p$ $(0.80,0.85,0.95)$. We assign money/rate,
geometry/measurement, and counting/logic specialties; a keyword
heuristic labels prompts and informs each agent whether a prompt is near
its specialty. Reward is exact match after extracting the required final
line \texttt{\#\#\#\# <number>}.

LoRA adapters use rank $16$, scaling $32$, zero dropout, applied to
attention and MLP projections; the shared backbone is loaded in 4-bit
form when needed. The router outcome model mean-pools hashed token
embeddings, concatenates an agent embedding, and applies a
one-hidden-layer predictor; its sigmoid is $\widehat\mu_j$ and its logit
$s_j$ defines
$p_j=(1-\epsilon)\exp(s_j/\tau)/\sum_k\exp(s_k/\tau)+\epsilon/3$. We
anneal $\tau:1.00\!\to\!0.70$ and $\epsilon:0.05\!\to\!0.03$; removal
renormalizes the same rule over two candidates. The outcome model learns
only from selected feedback using a $50{,}000$-entry replay buffer,
batches of $64$, inverse-propensity-weighted binary cross-entropy,
propensity floor $0.05$, weight clip $3.0$, and eight replay steps
during warmup and four thereafter. The first $25$ updates train only
this model, and the removal signal is computed before the current
rollout enters replay.

The run configuration (\texttt{signal\_single}) is $512$ train and $128$
test examples, $150$ updates, batch size $1$, $m=4$, maximum completion
length $96$, and seed $42$. GRPO standardizes each agent's four scores
as
$\mathrm{Adv}_{i,g}=(X_{i,g}-\overline X_i)/\max\{\operatorname{sd}(X_{i,1:m}),10^{-6}\}$
before the clipped loss. LoRA uses AdamW at $10^{-5}$, gradient norm
$1.0$, GRPO clip $0.2$, and KL coefficient $0.02$; the outcome model
uses AdamW at $10^{-3}$. Each actor update uses its fresh rollout and is
therefore on-policy; reusing a batch would require the usual importance
ratio and clipping.

Implementation code for the experiments is available in our public Google Colab notebook \citep{googlecolab}.

\end{localparagraphgap}

\end{document}